\documentclass[pdflatex,sn-mathphys-num]{sn-jnl}% Math and Physical Sciences Numbered Reference Style
\usepackage{graphicx}%
\usepackage{multirow}%
\usepackage{amsmath,amssymb,amsfonts}%
\usepackage{amsthm}%
\usepackage[title]{appendix}%
\usepackage{xcolor}%
\usepackage{textcomp}%
\usepackage{manyfoot}%
\usepackage{booktabs}%
\usepackage{url}
\usepackage{hyperref}
\usepackage{listings}%
\theoremstyle{thmstyleone}%
\newtheorem{theorem}{Theorem}%  meant for continuous numbers
\theoremstyle{thmstyletwo}%

\theoremstyle{thmstylethree}%

\begin{document}

\title[Latent dynamics for PDE trajectory reconstruction]{Latent dynamics for reconstructing fine-scale PDE trajectories from coarse observations}

%%=============================================================%%
%% GivenName	-> \fnm{Joergen W.}
%% Particle	-> \spfx{van der} -> surname prefix
%% FamilyName	-> \sur{Ploeg}
%% Suffix	-> \sfx{IV}
%% \author*[1,2]{\fnm{Joergen W.} \spfx{van der} \sur{Ploeg}
%%  \sfx{IV}}\email{iauthor@gmail.com}
%%=============================================================%%
\author[1]{\fnm{Chenhao} \sur{Si}}
\equalcont{These authors contributed equally to this work.}
\email{chenhao.si@link.cuhk.edu.cn}

\author[2,3]{\fnm{Zishan} \sur{Qian}}
\email{qianzishan@bimsa.cn}
\equalcont{These authors contributed equally to this work.}

\author[3,4]{\fnm{Shing-Tung} \sur{Yau}}

\author*[3,4]{\fnm{Rongling} \sur{Wu}}
\email{ronglingwu@bimsa.cn}

\author*[1]{\fnm{Ming} \sur{Yan}}
\email{yanming@cuhk.edu.cn}

\affil*[1]{\orgdiv{School of Data Science},
\orgname{The Chinese University of Hong Kong, Shenzhen},
\orgaddress{\city{Shenzhen}, \country{China}}}

\affil[2]{\orgname{Institute of Statistics and Big Data, Renmin University},
\orgaddress{\city{Beijing}, \country{China}}}

\affil[3]{\orgname{Beijing Key Laboratory of Topological Statistics and Applications for Complex Systems, Beijing Institute of Mathematical Sciences and Applications},
\orgaddress{\city{Beijing}, \country{China}}}

\affil[4]{\orgname{Yau Mathematical Sciences Center, Tsinghua University},
\orgaddress{\city{Beijing}, \country{China}}}

%%==================================%%
%% Sample for unstructured abstract %%
%%==================================%%

\abstract{
Coarse PDE observations omit spatial detail and intermediate states of evolving fields. We present Structured Latent Dynamics (SLD), a framework that reconstructs high-resolution trajectories from spatially and temporally downsampled simulations. SLD interpolates the input in time, adds a learned temporal correction and uses the resulting sequence to guide a hidden state on the coarse grid. Each update of the latent state combines a learned gated response with horizontal and vertical central differences and a five-point Laplacian of the state. The three spatial terms have learned channel-wise coefficients. Two Haar wavelet synthesis branches map the hidden state and refined coarse input to the high-resolution field. We evaluate SLD on two-dimensional Gray--Scott, complex Ginzburg--Landau and viscous Burgers systems, reconstructing 100 output frames from 25 input frames with eightfold refinement in each spatial direction. SLD achieves lower mean relative $L_2$ error and mean absolute error than unstructured latent dynamics and ConvLSTM baselines with comparable parameter counts, both on held-out trajectories and on later windows of trajectories used during training. SLD also reduces errors in regions with large reference gradients and across most or all of the evaluated later time intervals. A representative Gray--Scott comparison shows lower spectral error over most intermediate and high wavenumbers. Removing the fixed spatial response increased reconstruction error, supporting its contribution to the SLD architecture.}

\keywords{PDE trajectory reconstruction, latent dynamics, spatiotemporal reconstruction, scientific machine learning}

%%\pacs[JEL Classification]{D8, H51}

%%\pacs[MSC Classification]{35A01, 65L10, 65L12, 65L20, 65L70}

\maketitle

\section{Introduction}
\label{sec:Introduction}

Partial differential equations (PDEs) describe transport, reaction, diffusion and pattern formation across science and engineering. Their numerical approximation entails a compromise between resolution and cost: fine grids and short time steps are needed to represent steep gradients, interfaces, shocks, vortices and multiscale interactions, but they rapidly increase memory and computational requirements \cite{LeVeque2002FiniteVolume,Canuto2006SpectralMethods,Davidson2015Turbulence,Warner2010NumericalWeather}. This compromise is especially consequential when downstream quantities depend on derivatives or spectral content, because a coarse field may retain the large-scale pattern while omitting relevant small-scale structure \cite{Pope2000TurbulentFlows,Sreenivasan1999FluidTurbulence}. Reconstructing fine-scale PDE trajectories from coarse observations therefore requires recovery of both spatial detail and its evolution between observed times.

Machine learning for PDEs addresses several objectives. Sparse regression and symbolic approaches seek governing equations or reduced coordinates from data \cite{Brunton2016SINDy,Rudy2017PDEFIND,Cranmer2020DiscoveringSymbolicModels,Champion2019Coordinates}. Physics-informed networks incorporate differential equations into the optimization objective, while related architectures encode discrete operators or learn discretizations from data \cite{Raissi2019PINNs,Karniadakis2021PINNs,long2018pde,long2019pde,BarSinai2019DataDrivenDiscretizations,Rao2023PeRCNN}. Learned closures and solver components can also accelerate simulation or correct under-resolved dynamics \cite{Duraisamy2019TurbulenceML,Kochkov2021MLAcceleratedCFD,Thuerey2021PhysicsBasedDL}. The reconstruction problem considered here uses observations of an evolving coarse field to infer the corresponding high-resolution trajectory.

Operator learning and learned simulators approximate families of PDE solution maps. DeepONet and Fourier neural operators provide constructions for mappings between function spaces \cite{Lu2021DeepONet,li2021fourier,Kovachki2023NeuralOperator}; subsequent work has introduced adaptive spectral mixing, wavelet representations, geometric deformations and attention-based operators \cite{Guibas2021AdaptiveFNO,Gupta2021Multiwavelet,Li2023GeoFNO,Cao2021GalerkinTransformer,Hao2023GNOT,Tran2023FFNO}. Graph-based simulators and message-passing PDE solvers accommodate meshes and local interactions \cite{Pfaff2021MeshGraphNets,Brandstetter2022MessagePassingPDE}. Community benchmarks have broadened the range of equations, geometries and parameter regimes used to evaluate these models \cite{Takamoto2022PDEBench,Ren2025SuperBench,Ohana2024TheWell}.

Temporal reconstruction requires information to be carried between output times while remaining consistent with the observed coarse evolution. Recurrent networks, reservoir models and latent differential equations propagate high-dimensional states or learn compact dynamical coordinates \cite{Hochreiter1997LSTM,Shi2015ConvLSTM,Vlachas2018DataDrivenForecasting,Pathak2018ModelFreePrediction,Chen2018NeuralODE,Rubanova2019LatentODE,Kidger2020NeuralCDE}. Koopman-inspired and sparse latent models further show how nonlinear evolution can be represented in learned coordinates \cite{Lusch2018Koopman,Champion2019Coordinates}. We use a latent state to carry information between output times and update it using features extracted from the coarse input sequence.

Scientific reconstruction and downscaling studies have demonstrated that learned models can recover spatial detail in turbulence, reacting flows and climate fields \cite{Fukami2019SuperResolutionFlow,Deng2019CNNFluidSR,Bode2021UsingDLTurbulence,Stengel2020ClimateSR}. More recent work has coupled spatial refinement with temporal interpolation, physical residuals or learned latent evolution \cite{Ren2023PhySR,kontolati2024learning,Rao2023PeRCNN}. In PDE reconstruction, errors in interface positions, spectral power and regions with large gradients describe aspects of the solution that global field errors can obscure. We test whether fixed directional differences and a Laplacian improve the update of a latent state guided by coarse observations. These operators act on learned features, and their channel-wise weights are learned during training. 

Here we introduce \textbf{Structured Latent Dynamics} (SLD) for reconstructing high-resolution PDE trajectories from coarse observations (Fig.~\ref{fig:arch}). Its state update combines a learned gated response with fixed directional differences and a Laplacian of the latent state. Learned coefficients set the contribution of each spatial operator in each channel. Temporal refinement supplies guidance for each update, and two Haar wavelet synthesis branches reconstruct the output from the updated state and refined coarse input. We evaluate SLD on two-dimensional Gray--Scott, complex Ginzburg--Landau and viscous Burgers systems, mapping 25 coarse frames to 100 frames with eightfold spatial refinement. SLD achieves lower mean reconstruction errors than Unstructured Controlled Latent Dynamics (UCLD) and ConvLSTM with comparable parameter counts in both evaluation protocols. On Gray--Scott, removing the structured spatial term approximately doubles the mean relative $L_2$ error. These results support the proposed update rule within the SLD reconstruction framework.

\begin{figure}[!htb]
\centering
\includegraphics[width=\linewidth]{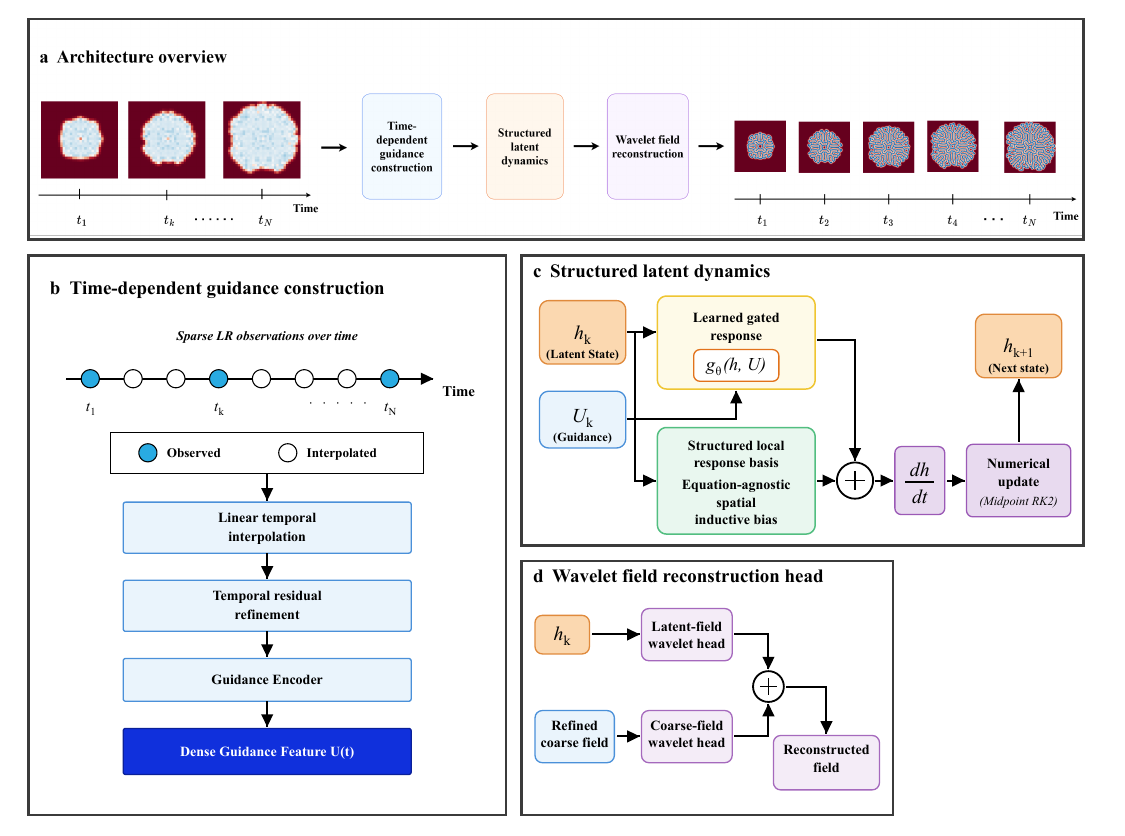}
\caption{
\textbf{Structured Latent Dynamics framework.}
\textbf{a,} A sequence of spatially coarse observations at sparse times is used to reconstruct a high-resolution field sequence. Information from the coarse sequence guides the reconstruction throughout the output interval.
\textbf{b,} Temporal refinement. Linear interpolation increases the number of time positions. A temporal convolutional module predicts a correction at each spatial location, giving a refined sequence on the coarse grid.
\textbf{c,} Structured latent dynamics. The hidden state is carried between output times. Its update combines a gated response computed from the state and the refined input with fixed horizontal and vertical central differences and a five-point Laplacian of the state. Learned channel-wise coefficients weight the three spatial responses.
\textbf{d,} Multiscale wavelet synthesis. Two learned branches process the hidden state and the refined coarse field, respectively. Each predicts a complete set of Haar wavelet coefficients and reconstructs a high-resolution field by inverse wavelet synthesis. Their outputs are added to form the prediction.
}
\label{fig:arch}
\end{figure}

\section{Results}
\label{sec:results}

\subsection{SLD reconstructs trajectories by updating a hidden state with coarse observations}
\label{subsec:sld_results}

SLD carries information between reconstructed frames through a latent state, a learned multichannel feature field on the coarse grid (Fig.~\ref{fig:arch}). Features extracted from the temporally refined coarse input guide each state update. Two Haar wavelet synthesis branches map the updated state and the refined coarse frame to the high-resolution output.

Given a low-resolution input segment, linear temporal interpolation followed by a learned correction produces a refined sequence with \(T_{\rm out}=100\) frames. A spatial encoder maps these frames to guidance features \(U_k\in\mathbb{R}^{C_h\times H\times W}\). Here, \(H\times W\) is the coarse spatial grid and \(C_h\) is the number of feature channels. The first guidance feature initializes the hidden state through a learned pointwise projection. SLD then updates the state \(h_k\in\mathbb{R}^{C_h\times H\times W}\) at each output index \(k\) by
\begin{equation}
    \begin{gathered}
    h_k
    =
    \Phi_{\Delta t}\!\left(h_{k-1},U_k;F_\theta\right),
    \qquad F_\theta(h,U)
    =
    G_\theta(h,U)+\mathcal{S}_{\psi}h ,
    \qquad k=1,\ldots,T_{\rm out}.
    \end{gathered}
    \label{eq:sld_latent_update}
\end{equation}
where \(\Phi_{\Delta t}\) advances the hidden state by one explicit midpoint step of size \(\Delta t\), holding \(U_k\) fixed within the step. We use \(\Delta t=1\) in network time units. The learned response \(G_\theta\) uses gates to control how the current state changes in response to the guidance. The linear operator \(\mathcal{S}_{\psi}\) adds local spatial differences of the state. Methods specifies the midpoint update and gate equations.

The linear spatial response is
\begin{equation}
    \mathcal{S}_{\psi}h
    =
    \alpha\odot \Delta h
    +
    \beta_x\odot D_x h
    +
    \beta_y\odot D_y h ,
    \label{eq:sld_structured_response}
\end{equation}

where \(D_xh\) and \(D_yh\) are horizontal and vertical central differences, and \(\Delta h\) is a five-point Laplacian. These fixed filters act separately on each latent channel. The learned vectors \(\psi=(\alpha,\beta_x,\beta_y)\) assign one coefficient per response and channel, with \(\odot\) denoting element-wise multiplication. The central differences measure horizontal and vertical variation. The five-point Laplacian subtracts four times the central value from the sum of its four nearest neighbours. Supplementary Note~\ref{supp:operator_characterization} gives the Fourier multiplier, operator norm bound and energy identity of this linear component.

Two Haar wavelet synthesis branches reconstruct the high-resolution output. One receives the evolved hidden state, and the other receives the temporally refined coarse field directly. Each branch predicts approximation coefficients at the coarsest scale and detail coefficients at all three synthesis levels. Three inverse Haar synthesis levels increase the spatial resolution by \(2\times\) per level, giving the target \(8\times\) refinement. The two reconstructed fields are added to form the prediction.

We next evaluate reconstruction on held-out trajectories and later windows, and then examine the effect of removing \(\mathcal{S}_{\psi}\) in a Gray--Scott ablation.

\subsection{Evaluation protocols test held-out trajectories and later temporal windows}
\label{subsec:evaluation_overview}

We evaluated SLD on two-dimensional Gray--Scott reaction--diffusion, complex Ginzburg--Landau and two-component viscous Burgers systems. Low-resolution inputs were constructed by downsampling the numerical reference trajectories. Each input contained 25 frames, each target contained 100 frames, and the target grid had eight times as many points in each spatial direction. The first protocol held out complete trajectories. In the second, the early segments \(S_1\)--\(S_3\) from all ten trajectories were used for parameter optimization. The disjoint later segments \(S_4\)--\(S_5\) were used for validation or testing, with their corresponding low-resolution inputs supplied to the models.

The two principal baselines had comparable numbers of trainable parameters and shared SLD's data splits, normalization, objective, optimizer settings and checkpoint selection rule. Unstructured Controlled Latent Dynamics (UCLD) retained the temporal refinement, guidance, midpoint update and reconstruction heads, and replaced the entire SLD latent vector field with a learned convolutional field. ConvLSTM retained the temporal refinement and reconstruction interface and used a discrete convolutional recurrent update.

We report relative \(L_2\) error and mean absolute error, together with radial power spectra, errors at individual output times and root mean square error (RMSE) grouped by the gradient magnitude of the reference field. These diagnostics measure spectral content, temporal variation in error and reconstruction accuracy from smooth to sharp regions. Each system contained ten independently generated trajectories. Methods and Supplementary Notes~\ref{supp:data_generation}--\ref{supp:numerical_results} provide the equations, numerical solvers, data splits, metric aggregation and training settings.

\subsection{SLD reconstructs held-out trajectories from unseen initial conditions}
\label{subsec:unseen_ic}

We first evaluated reconstruction on trajectories generated from unseen initial conditions. For each system, we used a \(7/2/1\) train/validation/test split by trajectory identity over ten folds, with each trajectory serving exactly once as the test trajectory. At inference, each model received low-resolution segments from the held-out trajectory and reconstructed their high-resolution counterparts.

Representative reconstructions are shown in Fig.~\ref{fig:caseB_visualization}. All three methods recovered the global field morphology. Relative to UCLD and ConvLSTM, SLD showed less smoothing and displacement along Gray--Scott interfaces, clearer spiral transitions in the magnified Ginzburg--Landau region, and less blurring near the displayed Burgers transitions.

The validation histories shown in Fig.~\ref{fig:caseB_visualization} also gave lower relative \(L_2\) errors for SLD than for both controls in all three systems. Fig.~\ref{fig:caseB_quant} evaluates the selected checkpoints using field errors, spectra and errors grouped by time and gradient magnitude.

SLD had the lowest mean relative \(L_2\) error and MAE for all three systems (Fig.~\ref{fig:caseB_quant}\textbf{a}). For each fold, errors were averaged over the test segments of its held-out trajectory. The reported mean and s.d. were then calculated across the ten held-out trajectories. The mean UCLD relative \(L_2\) errors were \(2.82\times\), \(2.39\times\) and \(2.92\times\) those of SLD on Gray--Scott, Ginzburg--Landau and Burgers, respectively. The corresponding ratios of ConvLSTM error to SLD error were \(2.45\times\), \(2.51\times\) and \(2.66\times\). Numerical values are reported in Supplementary Table~\ref{tab:taskA_numerical}.

The spectral and local error diagnostics showed the same ordering. In the representative Gray--Scott comparison, SLD had lower spectral relative error than both controls over most intermediate and high wavenumbers (Fig.~\ref{fig:caseB_quant}\textbf{b}). Its mean relative \(L_2\) error also remained lower in each of the four test windows ordered by their start times (Fig.~\ref{fig:caseB_quant}\textbf{c}). Across all three systems, SLD had lower RMSE in each group of points defined by the reference gradient magnitude, including the groups with the sharpest gradients (Fig.~\ref{fig:caseB_quant}\textbf{d}). These results show improved reconstruction of both global fields and regions with sharp spatial variation.

\begin{figure}[!htb]
\centering
   \begin{minipage}{0.9\textwidth}
     \centering
     \includegraphics[width=\linewidth]{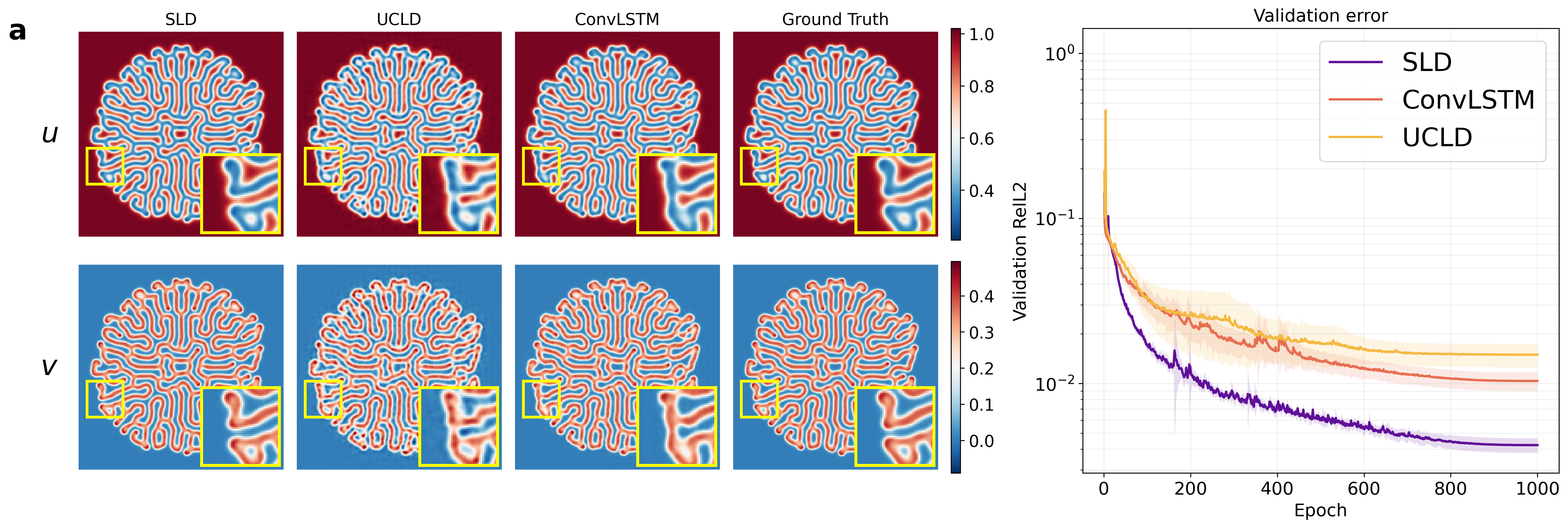}
   \end{minipage}

   \begin{minipage}{0.9\textwidth}
     \centering
     \includegraphics[width=\linewidth]{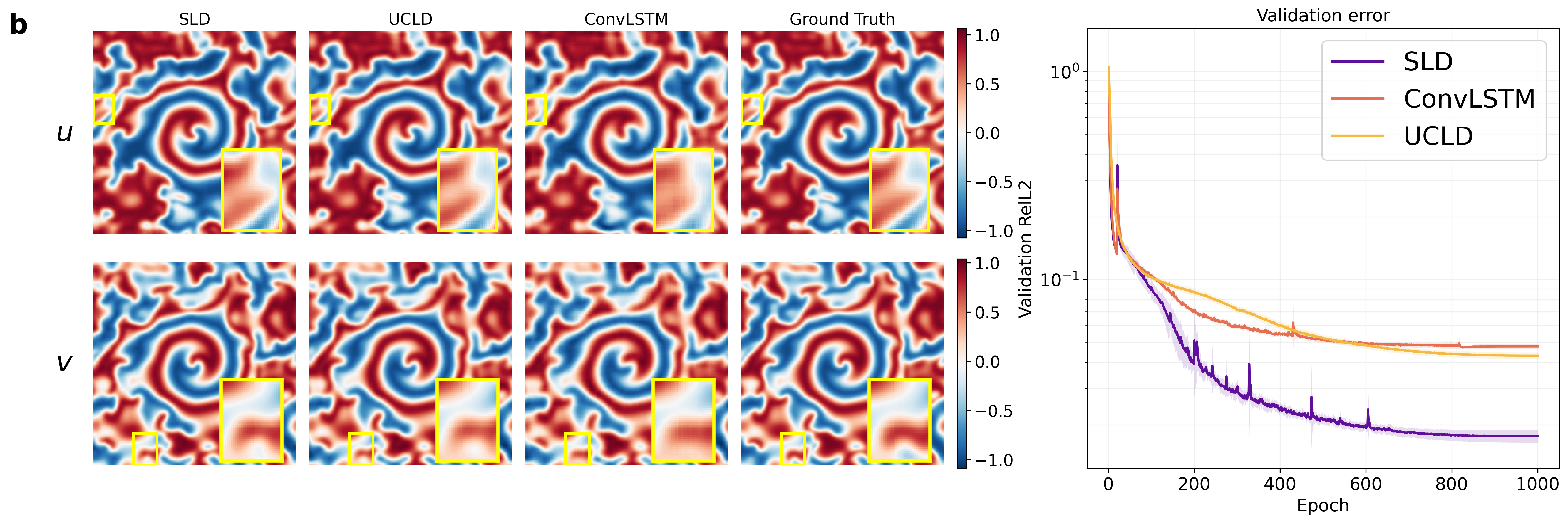}
   \end{minipage}

   \begin{minipage}{0.9\textwidth}
     \centering
     \includegraphics[width=\linewidth]{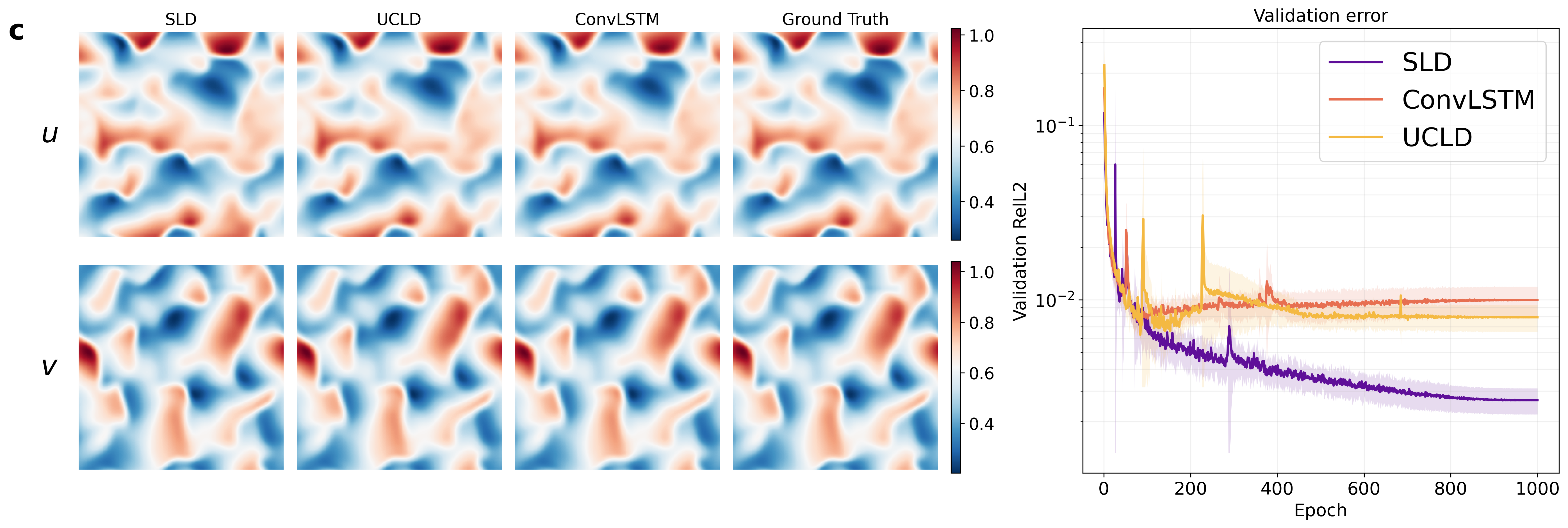}
   \end{minipage}

\caption{
\textbf{Reconstruction from unseen initial conditions.} \textbf{a,} Gray--Scott. \textbf{b,} Complex Ginzburg--Landau. \textbf{c,} Two-component viscous Burgers. For each system, high-resolution \(u\) and \(v\) reconstructions from SLD, UCLD and ConvLSTM are shown with the reference field. UCLD uses an unstructured convolutional latent vector field, and ConvLSTM uses a discrete recurrent update; both have parameter counts comparable to SLD. Yellow boxes mark the magnified regions. The right column shows validation relative \(L_2\) error histories under the shared training objective. Figure~\ref{fig:caseB_quant} reports the corresponding field errors, spectra and errors grouped by time and reference gradient magnitude.
}
\label{fig:caseB_visualization}
% AUTHOR CHECK REQUIRED [REV-06]: Define the center curve and shaded interval
% in the validation histories from the plotting code.
\end{figure}

\begin{figure}[!htb]
\centering
   \begin{minipage}{0.8\textwidth}
     \centering
     \includegraphics[width=\linewidth]{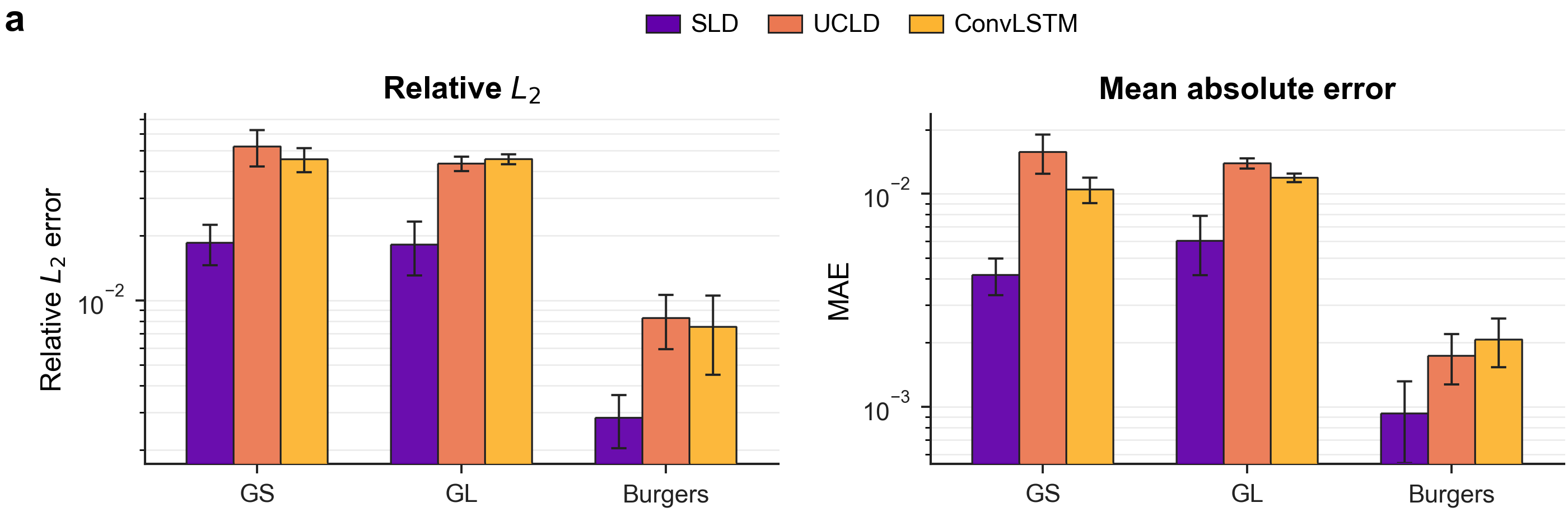}
   \end{minipage}
   \begin{minipage}{0.8\textwidth}
     \centering
     \includegraphics[width=\linewidth]{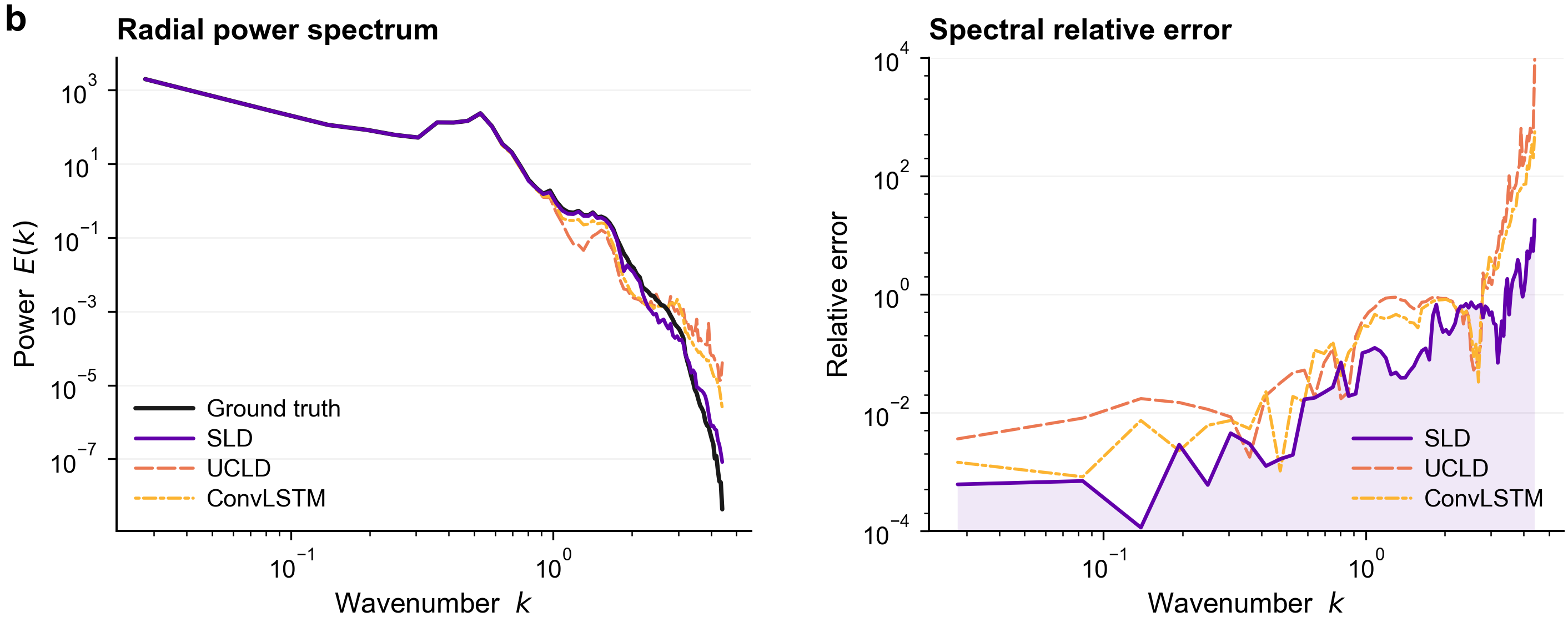}
   \end{minipage}

   \begin{minipage}{0.8\textwidth}
     \centering
     \includegraphics[width=\linewidth]{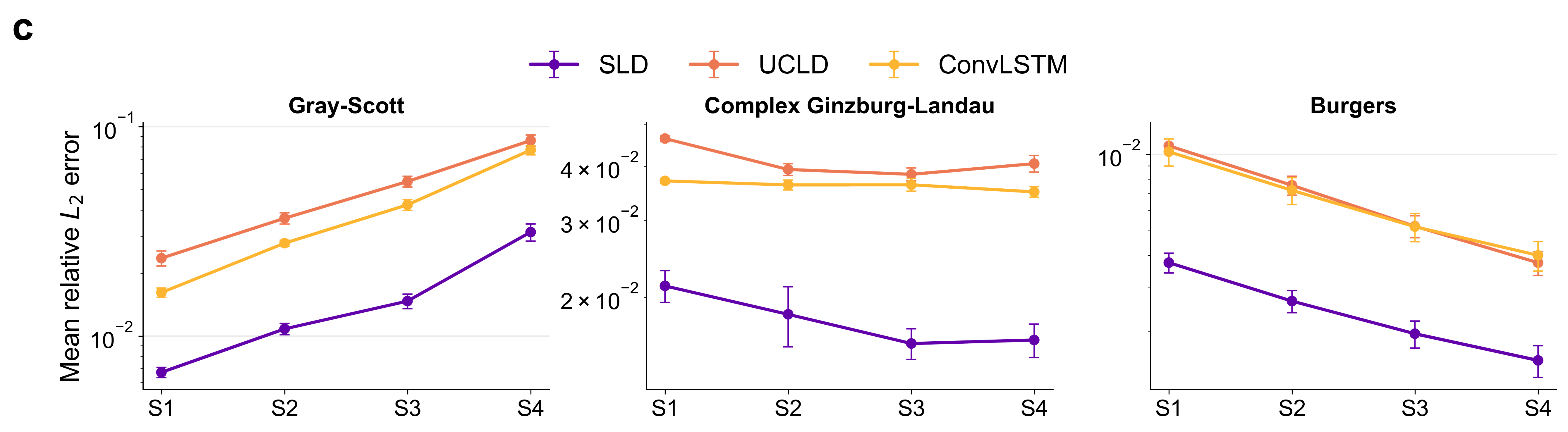}
   \end{minipage}
   \begin{minipage}{0.8\textwidth}
     \centering
     \includegraphics[width=\linewidth]{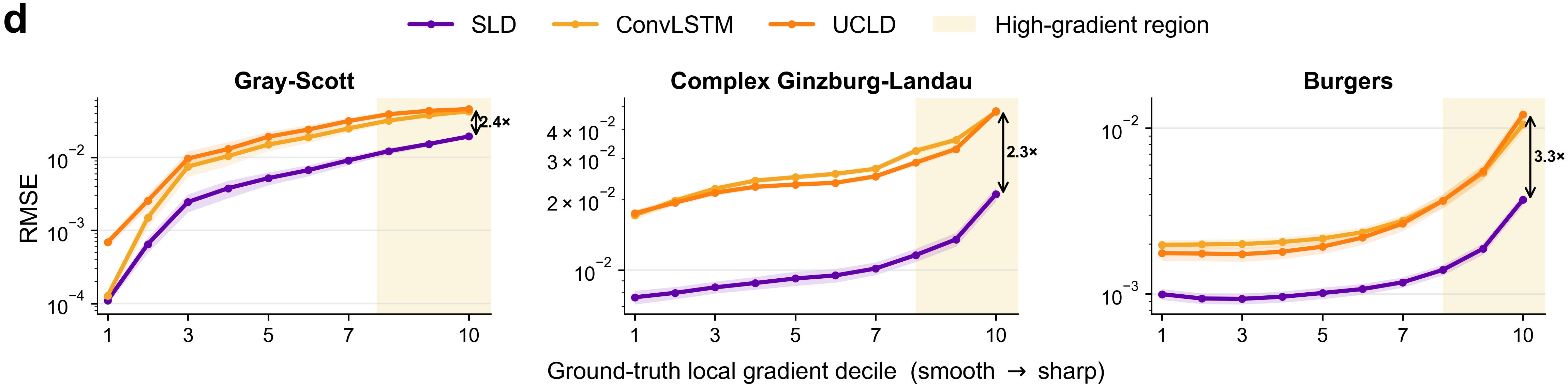}
   \end{minipage}
\caption{
\textbf{Quantitative evaluation under unseen initial conditions.} \textbf{a,} Relative \(L_2\) error and mean absolute error (MAE) for Gray--Scott (GS), complex Ginzburg--Landau (GL) and Burgers. Errors are first averaged over the test segments of each held-out trajectory. Bars and error bars show the mean and standard deviation across the ten held-out trajectories (\(n=10\)). \textbf{b,} Representative Gray--Scott comparison of the radial power spectrum \(E(k)\) and its relative error at wavenumber \(k\). These quantities measure the recovery of spatial variation at different scales. \textbf{c,} Mean relative \(L_2\) error for four test windows beginning at low-resolution indices 100, 150, 200 and 250. \textbf{d,} RMSE in ten groups of space--time points ordered by the local gradient magnitude of the reference field. The groups run from smooth to sharp regions, and shading marks the region with large gradients. SLD is compared with UCLD and ConvLSTM, which have comparable parameter counts.
}
\label{fig:caseB_quant}
% AUTHOR CHECK REQUIRED [REV-07]: In panel c, replace S1--S4 by start indices
% 100,150,200,250 to avoid conflict with S4=225 in the other protocol. Define
% panel-c error bars, panel-d ratio arrows, and panel-b wavenumber units.
\end{figure}

\subsection{SLD reconstructs later high-resolution windows from coarse trajectory inputs}
\label{subsec:later_time}

We next tested reconstruction in later windows of the same ten trajectories (Fig.~\ref{fig:later_time}\textbf{a}). Early segments \(S_1\)--\(S_3\) from all ten trajectories supplied high-resolution targets for parameter optimization. The later segments \(S_4\)--\(S_5\) occupied disjoint time intervals. For each of five random partitions, later high-resolution fields from five trajectories were used for checkpoint selection, and those from the other five were used for testing. At inference, each model received the low-resolution input from the later segment being reconstructed. This protocol evaluates reconstruction at later times for trajectories whose early evolution was used in training.

SLD had the lowest mean relative \(L_2\) error and MAE in the later test windows for all three systems (Fig.~\ref{fig:later_time}\textbf{b}). Within each partition, errors were averaged over ten test segments: two later windows from each of five trajectories. The reported mean and s.d. were calculated across the five resulting test summaries (\(n=5\) repeated partitions). The mean UCLD relative \(L_2\) errors were \(5.43\times\), \(2.96\times\) and \(6.88\times\) those of SLD on Gray--Scott, Ginzburg--Landau and Burgers, respectively. The corresponding ratios of ConvLSTM error to SLD error were \(2.30\times\), \(3.66\times\) and \(6.04\times\). Aggregate values are reported in Supplementary Table~\ref{tab:taskB_numerical}.

Figure~\ref{fig:later_time}\textbf{c} shows error at each output time after concatenating the \(S_4\) and \(S_5\) curves and omitting the first ten high-resolution frames of each segment. SLD remained below both controls throughout the plotted Gray--Scott and Ginzburg--Landau intervals and over most of the Burgers interval. Its lower error persisted into the farther window \(S_5\), showing that the model reconstructed later fine-scale evolution from the coarse observations supplied in those windows.

\begin{figure}[!htb]
\centering
   \begin{minipage}{0.8\textwidth}
     \centering
     \includegraphics[width=\linewidth]{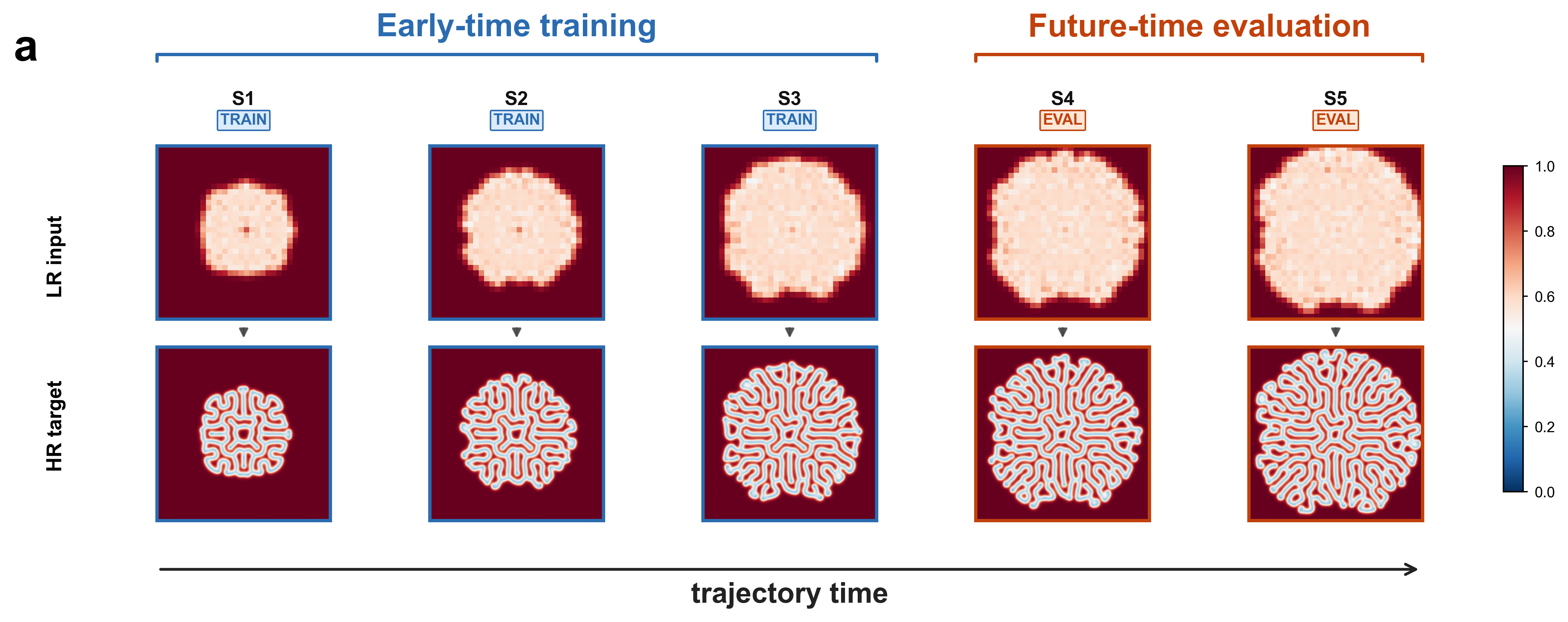}
   \end{minipage}

   \vspace{0.5em}

   \begin{minipage}{0.8\textwidth}
     \centering
     \includegraphics[width=\linewidth]{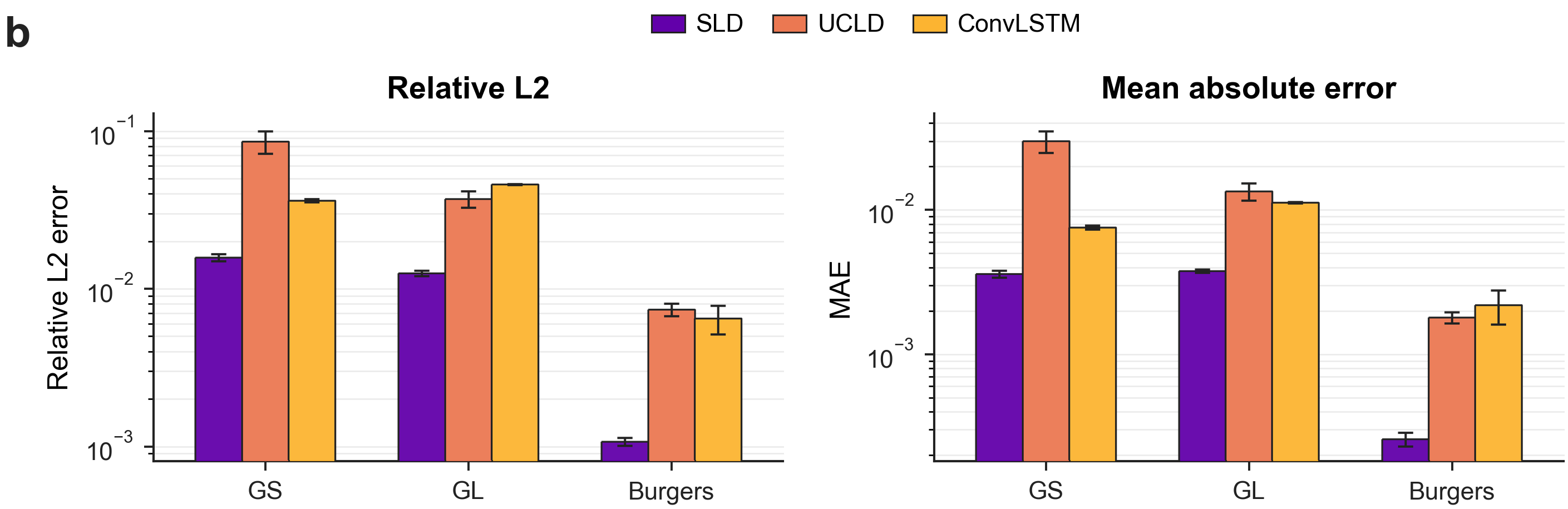}
   \end{minipage}

   \vspace{0.5em}

   \begin{minipage}{0.8\textwidth}
     \centering
     \includegraphics[width=\linewidth]{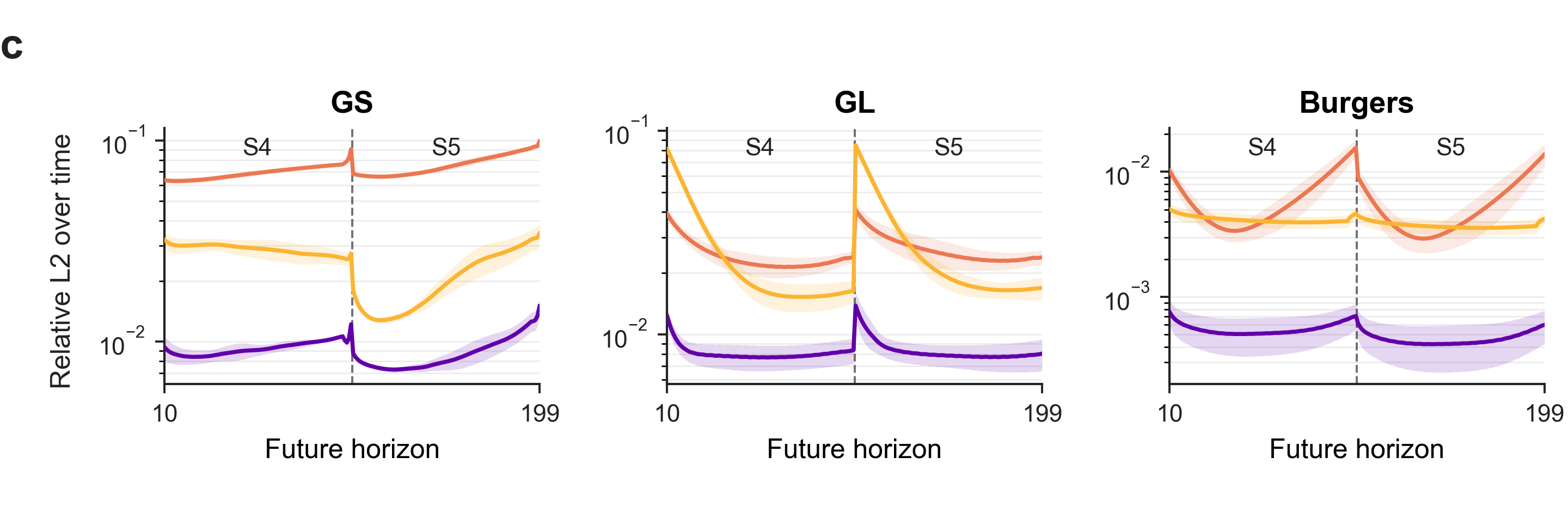}
   \end{minipage}

\caption{\textbf{Reconstruction in later temporal windows.} \textbf{a,} Early segments \(S_1\)--\(S_3\) from all ten trajectories provide high-resolution targets for parameter optimization. Later segments \(S_4\)--\(S_5\) occupy disjoint time intervals. In each partition, later targets from five trajectories select the checkpoint and those from the other five evaluate the selected model. All models receive the low-resolution inputs of the later segments being reconstructed. \textbf{b,} Relative \(L_2\) error and mean absolute error for Gray--Scott (GS), complex Ginzburg--Landau (GL) and Burgers. Within each partition, errors are averaged over the two later segments from each of five test trajectories. Bars and error bars show the mean and s.d. of these summaries across five random partitions (\(n=5\)). \textbf{c,} Relative \(L_2\) error at each output time, with the \(S_4\) and \(S_5\) curves concatenated after the first ten high-resolution frames of each segment are omitted. Dashed vertical lines separate \(S_4\) and \(S_5\). Shading shows the s.d. across the five curves, each obtained from one partition. UCLD and ConvLSTM have parameter counts comparable to SLD.
}
\label{fig:later_time}
% AUTHOR CHECK REQUIRED [REV-08]: Replace the embedded "Future horizon" axis
% label by the actual output index or physical-time convention. State why the
% first ten frames of each segment are omitted and which metrics use this cut.
\end{figure}

\subsection{Gray--Scott comparisons test additional architectures and the spatial response}
\label{subsec:local_response_ablation}

Under the unseen-initial-condition protocol described in Section~\ref{subsec:unseen_ic}, we compared SLD on Gray--Scott with implementations of FNO~\cite{li2021fourier}, LNO~\cite{cao2024laplace}, MambaNO~\cite{zheng2024MambaNO}, OSEDiff~\cite{wu2024onestep} and PhySR~\cite{Ren2023PhySR} adapted to the reconstruction task. We also trained an SLD variant with \(\mathcal{S}_{\psi}=0\), retaining the gated response, the remaining architecture and the training objective. All models used the same trajectory splits, input and output resolutions, normalization and evaluation metrics. Supplementary Note~\ref{supp:additional_baselines} describes the adaptations and hyperparameters. The narrow labyrinthine interfaces in Gray--Scott provide a representative test of spatial detail lost through downsampling.

In the representative fields, SLD without \(\mathcal{S}_{\psi}\) retained the global pattern layout but produced thicker and more displaced interfaces than full SLD (Fig.~\ref{fig:ablation}\textbf{a}). The additional baselines also showed greater smoothing, displacement or artificial high-frequency texture in the displayed interface regions.

Full SLD had the lowest mean relative \(L_2\) error and MAE among the evaluated models (Fig.~\ref{fig:ablation}\textbf{b}). Removing the structured spatial term increased the mean relative \(L_2\) error from \(1.855\times10^{-2}\) to \(3.720\times10^{-2}\) and the MAE from \(4.159\times10^{-3}\) to \(7.824\times10^{-3}\). This ablation supports the contribution of the structured spatial term to Gray--Scott reconstruction.

\begin{figure}[!htb]
\centering
   \begin{minipage}{0.8\textwidth}
     \centering
     \includegraphics[width=\linewidth]{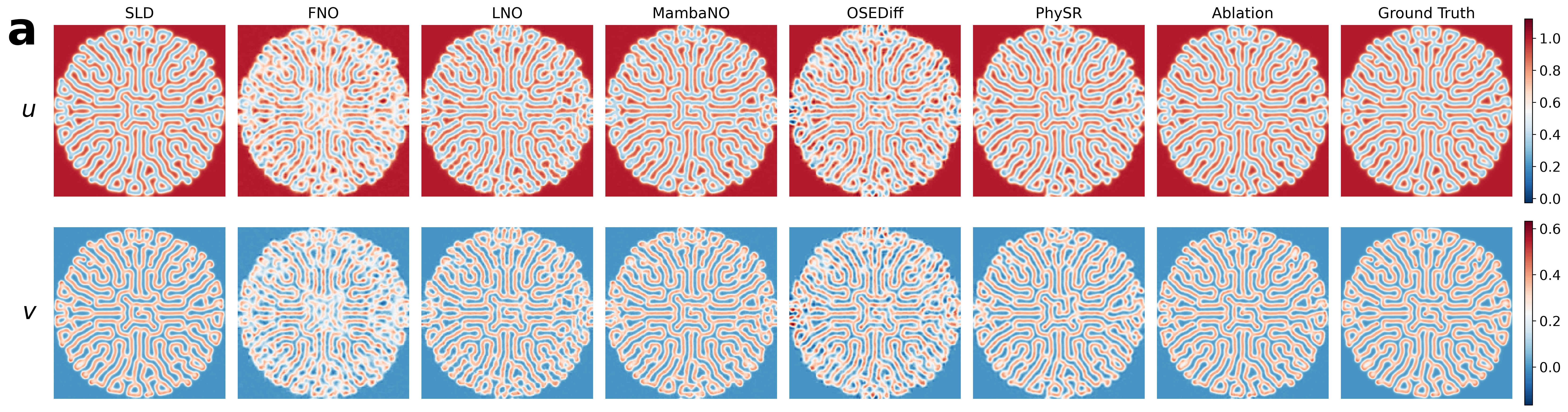}
   \end{minipage}

   \vspace{0.5em}

   \begin{minipage}{0.8\textwidth}
     \centering
     \includegraphics[width=\linewidth]{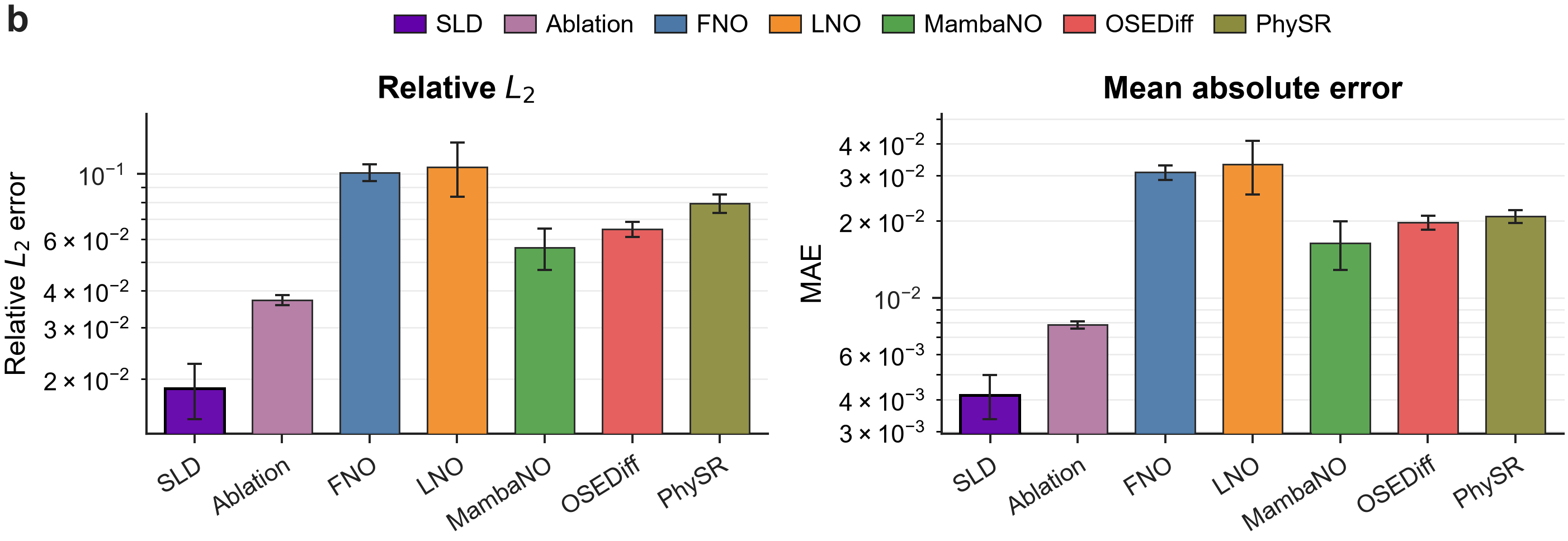}
   \end{minipage}

\caption{\textbf{Additional Gray--Scott comparisons and ablation of the spatial response.} \textbf{a,} Representative reconstructions of \(u\) and \(v\) from an unseen initial condition. Full SLD is compared with SLD trained with \(\mathcal{S}_{\psi}=0\) and the additional baselines adapted to this task, as described in Supplementary Note~\ref{supp:additional_baselines}. \textbf{b,} Relative \(L_2\) error and mean absolute error under the same protocol for held-out trajectories. Bars and error bars show the mean and standard deviation across ten trajectories. The ablation retains the gated response, the remaining architecture and the training objective. Lower values indicate better reconstruction.
}
\label{fig:ablation}
\end{figure}

\section{Discussion}
\label{sec:discussion}

SLD reconstructs high-resolution PDE trajectories by using coarse observations throughout the output interval. A hidden state carries information between reconstructed frames, and the refined coarse sequence guides each state update. Across the three two-dimensional systems, SLD had lower mean errors than UCLD and ConvLSTM with comparable parameter counts. This result held both for held-out trajectories and for later windows of trajectories whose early segments were used for training.

The architecture combines temporal refinement, a learned nonlinear latent response, fixed spatial differences and wavelet synthesis. UCLD and ConvLSTM provide comparisons with alternative rules for updating the hidden state, while the Gray--Scott ablation directly tests the spatial differences within SLD. Removing the structured spatial term increased reconstruction errors on Gray--Scott. This term combines central differences and a Laplacian of the latent state, with channel-wise coefficients learned during training.

Beyond aggregate field errors, SLD improved reconstruction in regions with large gradients across all three systems and recovered spectral content more accurately in the representative Gray--Scott comparison. The time-dependent error curves also showed lower errors through most or all of the evaluated later intervals. Together, these diagnostics describe improvements in the spatial detail and temporal evolution of the reconstructed fields.

The present experiments use ten noise-free numerical trajectories per system on two-dimensional periodic grids, with fixed refinement factors and coarse inputs obtained by downsampling. Additional baseline comparisons and the ablation of the spatial response are evaluated on Gray--Scott. Future work can extend this setting to observations from independent simulations on coarse grids, noisy or irregular measurements, varying parameters and boundary conditions, and three-dimensional domains. Evaluating later windows of entirely unseen trajectories would further test temporal and trajectory generalization together.

\section{Methods}
\label{sec:methods}

\subsection{Problem formulation and segment construction}
\label{subsec:problem_setting}

We reconstruct high-resolution PDE trajectories from observations sampled on a coarse spatial grid at sparse times. Let \(\mathbf{x}_{1:T_{\rm in}}\in\mathbb{R}^{T_{\rm in}\times C\times H\times W}\) denote an input segment and \(\mathbf{y}_{1:T_{\rm out}}\in\mathbb{R}^{T_{\rm out}\times C\times sH\times sW}\) its target segment. Here, \(C\) is the number of physical variables, \(H\) and \(W\) are the coarse grid dimensions, \(s\) is the spatial refinement factor, and \(r=T_{\rm out}/T_{\rm in}\) is the ratio of sequence lengths. All experiments use \(C=2\), \(H=W=32\), \(s=8\), \(T_{\rm in}=25\), and \(T_{\rm out}=100\). Each model therefore maps 25 two-channel \(32\times32\) frames to 100 two-channel \(256\times256\) frames.

The objective is to learn a reconstruction map
\begin{equation}
    \mathcal{M}_\theta:
    \mathbf{x}_{1:T_{\rm in}}
    \mapsto
    \widehat{\mathbf{y}}_{1:T_{\rm out}},
    \label{eq:reconstruction_map}
\end{equation}

that predicts the high-resolution trajectory segment corresponding to the coarse input sequence. Reconstruction fidelity is evaluated using aggregate field errors, spectral diagnostics, errors at individual output times and errors grouped by reference gradient magnitude, as defined below.

For each PDE system, we generate high-resolution trajectories from distinct initial conditions and downsample them in space and time to obtain the corresponding low-resolution observations. We denote this process by
\begin{equation}
    \mathbf{x}
    =
    \mathcal{D}_{t,r}\mathcal{D}_{s}
    (\mathbf{y}),
    \label{eq:downsampling}
\end{equation}

where \(\mathcal{D}_{s}\) denotes spatial downsampling and \(\mathcal{D}_{t,r}\) keeps every fourth stored frame. A paired input samples target offsets \(0,4,\ldots,96\), while its 100-frame target covers offsets \(0,\ldots,99\). The temporal module converts these 25 observations into 100 features used to guide reconstruction. Supplementary Note~\ref{supp:temporal_refinement_details} specifies the interpolation coordinates. Segment start indices are fixed before training and shared by all models. In the protocol for reconstruction at later times, \(S_1\)--\(S_3\) provide training targets, and \(S_4\)--\(S_5\) provide validation and test targets. Supplementary Tables~\ref{tab:segment_times} and~\ref{tab:splits} give the indices and assignments.

Low- and high-resolution fields are normalized channel-wise using their respective statistics, computed only from the training data in each split. Within a split, SLD and all baselines use the same low- and high-resolution normalization constants. Circular padding and periodic finite differences implement the periodic boundary convention used in the numerical simulations.

\subsection{Structured Latent Dynamics}
\label{subsec:sld_methods}

SLD first refines the coarse input sequence in time and encodes each refined frame into guidance features. These features guide sequential updates of the latent state on the coarse grid. At each output index, two Haar wavelet synthesis branches decode the updated state and the refined coarse frame, and their outputs are added to form the prediction.

\paragraph{Temporal guidance construction.}
We first linearly resample the normalized low-resolution input sequence from 25 to 100 positions, then apply a learned temporal correction:
\begin{equation}
    \widetilde{\mathbf{x}}_{1:T_{\rm out}}
    =
    \mathcal{T}_\theta(\mathbf{x}_{1:T_{\rm in}}).
    \label{eq:temporal_refinement}
\end{equation}

At each spatial location, one-dimensional temporal convolutions predict a correction that is added to the linearly interpolated sequence. A spatial encoder then maps each refined low-resolution frame to a guidance feature,
\begin{equation}
    \mathbf{U}_k
    =
    E_\theta(\widetilde{\mathbf{x}}_k),
    \qquad
    k=1,\ldots,T_{\rm out},
    \label{eq:guidance_encoding}
\end{equation}

where \(\mathbf{U}_k\in\mathbb{R}^{C_h\times H\times W}\). The first guidance feature initializes the latent state,
\begin{equation}
    \mathbf{h}_0
    =
    P_\theta(\mathbf{U}_1),
    \label{eq:initial_latent}
\end{equation}
where \(P_\theta\) is a \(1\times1\) convolution that mixes feature channels at each grid point. The state \(\mathbf h_0\) initializes the recurrence; the first output is decoded from \(\mathbf h_1\) after the first update.

\paragraph{Guided latent evolution.}

The latent state \(\mathbf{h}_k\in\mathbb{R}^{C_h\times H\times W}\) is updated on the coarse grid, with \(C_h=64\). The index \(k\) counts reconstruction updates. We use a unit step \(\Delta t=1\) in network time; the physical time intervals of the simulated PDE trajectories are specified separately in Supplementary Note~\ref{supp:data_generation}. The vector field is written as
\begin{equation}
    \frac{{\rm d}\mathbf{h}}{{\rm d}t}
    =
    F_\theta(\mathbf{h}(t),\mathbf{U}(t)),
    \label{eq:latent_ode}
\end{equation}

where \(t\) denotes network time and \(\mathbf U(t)\) supplies the guidance. Each reconstruction step uses the following explicit midpoint update, with the same \(\mathbf U_k\) in both vector-field evaluations:
\begin{align}
    \mathbf{k}_1
    &=
    F_\theta(\mathbf{h}_{k-1},\mathbf{U}_k),\\
    \mathbf{k}_2
    &=
    F_\theta\!\left(
    \mathbf{h}_{k-1}
    +
    \frac{1}{2}\Delta t\,\mathbf{k}_1,
    \mathbf{U}_k
    \right),\\
    \mathbf{h}_k
    &=
    \mathbf{h}_{k-1}
    +
    \Delta t\,\mathbf{k}_2,
    \qquad
    k=1,\ldots,T_{\rm out}.
    \label{eq:midpoint_update}
\end{align}

The two evaluations use different latent states and the same guidance feature. The resulting state is decoded as output frame \(k\).

The latent vector field is written as
\begin{equation}
    F_\theta(\mathbf{h}_k,\mathbf{U}_k)
    =
    G_\theta(\mathbf{h}_k,\mathbf{U}_k)
    +
    \mathcal{S}_{\psi}(\mathbf{h}_k),
    \label{eq:sld_vector_field}
\end{equation}
where \(G_\theta\) is a learned gated response and \(\mathcal{S}_{\psi}\) is the structured latent response.

\paragraph{Learned gated response.}
Given the latent state \(\mathbf h\) and guidance features \(\mathbf U\), the network computes a reset gate \(\mathbf r\) and an update gate \(\mathbf z\), each with entries between zero and one. The reset gate weights the state used to construct the candidate \(\widetilde{\mathbf h}\). The factor \(1-\mathbf z\) scales the response from the current state towards this candidate:
\begin{align}
    [\mathbf{r},\mathbf{z}]
    &=
    \sigma\!\left(
    \mathcal{G}_\theta([\mathbf{h},\mathbf{U}])
    \right),\\
    \widetilde{\mathbf{h}}
    &=
    \mathcal{C}_\theta(
    [\mathbf{r}\odot\mathbf{h},\mathbf{U}]
    ),\\
    G_\theta(\mathbf{h},\mathbf{U})
    &=
    (1-\mathbf{z})\odot
    (\widetilde{\mathbf{h}}-\mathbf{h}).
    \label{eq:gated_response}
\end{align}

Here, \([\cdot,\cdot]\) denotes concatenation along the channel dimension, \(\sigma\) is the sigmoid function, and \(\odot\) denotes element-wise multiplication. Circular padding wraps spatial convolution inputs across opposite domain boundaries. The maps \(\mathcal G_\theta\) and \(\mathcal C_\theta\) produce the gates and candidate state, respectively. When an entry of \(\mathbf z\) approaches one, the corresponding entry of the learned response approaches zero; when it approaches zero, the response approaches \(\widetilde{\mathbf h}-\mathbf h\).

\paragraph{Structured spatial operator.}

The structured spatial operator applies three fixed finite-difference filters independently to each latent channel:
\begin{equation}
    \mathcal{S}_{\psi}(\mathbf{h})
    =
    \boldsymbol{\alpha}\odot\Delta\mathbf{h}
    +
    \boldsymbol{\beta}_x\odot D_x\mathbf{h}
    +
    \boldsymbol{\beta}_y\odot D_y\mathbf{h}.
    \label{eq:structured_response_methods}
\end{equation}

Here, \(D_x\) and \(D_y\) compute horizontal and vertical central differences, and \(\Delta\) is the five-point Laplacian. They act on a latent grid with unit spacing. The vectors \(\psi=\{\boldsymbol\alpha,\boldsymbol\beta_x,\boldsymbol\beta_y\}\) contain one learned coefficient per channel for each filter; each coefficient is shared across spatial locations and reconstruction steps. The central differences measure directional changes, while the Laplacian compares a latent value with its four nearest neighbours. The filter definitions are shared across PDE systems, and their coefficients are fitted separately in each training run.

A Fourier-domain characterization of \(\mathcal{S}_{\psi}\), including its spectral multiplier, operator norm bound and energy identity on periodic grids, is provided in Supplementary Note~\ref{supp:operator_characterization}.

\paragraph{Multiscale wavelet synthesis.}

Two Haar wavelet synthesis branches produce the normalized high-resolution prediction:
\begin{equation}
    \widehat{\mathbf{y}}^{\rm norm}_k
    =
    W_\theta(\mathbf{h}_k)
    +
    W_{\rm skip}(\widetilde{\mathbf{x}}_k),
    \qquad
    k=1,\ldots,T_{\rm out}.
    \label{eq:wavelet_synthesis}
\end{equation}

The latent branch \(W_\theta\) maps the evolved state to a high-resolution field. The skip branch \(W_{\rm skip}\) maps the refined coarse input directly to a field of the same shape. Each branch predicts both approximation and detail coefficients for a Haar wavelet pyramid. Three synthesis stages double the spatial dimensions successively from \(32\times32\) to \(64\times64\), \(128\times128\), and \(256\times256\). The two branch outputs are added, and the sum is converted to physical units using the high-resolution training statistics. Supplementary Note~\ref{supp:wavelet_details} specifies the coefficient arrangement and synthesis formulas.

\subsection{Training objective}
\label{subsec:training_objective}

For each PDE system and evaluation protocol, SLD, UCLD and ConvLSTM use a common objective. The weight \(\lambda_{\rm phy}\) controls the contribution of the PDE residual penalty, with \(\lambda_{\rm phy}=0\) giving the reconstruction loss alone:
\begin{equation}
    \mathcal{L}
    =
    \mathcal{L}_{\rm data}
    +
    \lambda_{\rm phy}
    \mathcal{L}_{\rm phy}.
    \label{eq:total_loss}
\end{equation}
The supervised term is
\begin{equation}
    \mathcal{L}_{\rm data}
    =
    \frac{1}{T_{\rm out}N_{\rm frame}}
    \sum_{k=1}^{T_{\rm out}}
    \left\|
    \widehat{\mathbf{y}}_k
    -
    \mathbf{y}_k
    \right\|_1 ,
    \label{eq:data_loss}
\end{equation}
where \(N_{\rm frame}=C(sH)(sW)\) is the number of scalar values in one output frame and \(\|\cdot\|_1\) is the sum of absolute values. Both loss terms are evaluated after transforming predictions to physical units. The discrete residual \(\mathcal R_{\mathcal P}\) measures the discrepancy between the time derivative of a reconstructed field and the right-hand side of its governing PDE:
\begin{equation}
    \mathcal{L}_{\rm phy}
    =
    \frac{1}{|\mathcal{I}_{\rm phy}|}
    \sum_{k\in\mathcal{I}_{\rm phy}}
    \left\|
    \mathcal{R}_{\mathcal{P}}
    (\widehat{\mathbf{y}})_k
    \right\|_2^2 ,
    \label{eq:physics_loss}
\end{equation}

where \(\mathcal P\) denotes the PDE system and \(\mathcal I_{\rm phy}=\{1,\ldots,T_{\rm out}\}\) is the set of output indices used for residual evaluation. Supplementary Note~\ref{supp:training_details} gives the schedule for \(\lambda_{\rm phy}\), optimizer settings and checkpoint selection.
% AUTHOR CHECK REQUIRED [REV-01]: Specify the actual discrete PDE residuals,
% temporal endpoint treatment and reduction of the squared residual for every
% system. These are not supplied by the simulation equations or optimizer schedule.
\subsection{Evaluation protocols}
\label{subsec:evaluation_protocols}

The evaluation uses trajectory identity and temporal location to define the
train--validation--test splits.

\paragraph{Protocol for unseen initial conditions.}

For each PDE system, the ten trajectories are partitioned by identity into seven training, two validation and one test trajectory. Model parameters are fitted on segments from the seven training trajectories, and the checkpoint is selected using the two validation trajectories. The model then receives low-resolution segments from the held-out test trajectory and reconstructs their corresponding high-resolution targets. We repeat this procedure ten times so that every trajectory serves exactly once as the test trajectory. Within each fold, errors for individual segments are averaged for that held-out trajectory. The reported mean and s.d. are calculated from one test value per held-out trajectory (\(n=10\)).

\paragraph{Protocol for reconstruction at later times.}

For reconstruction at later times, all ten trajectory identities contribute targets from the early segments \(S_1\)--\(S_3\) to training. The subsequent segments \(S_4\)--\(S_5\) occupy disjoint time intervals. In each of five repeated partitions, five identities supply the later validation targets for checkpoint selection, and the remaining five supply the later test targets. At inference, each model receives the low-resolution input from the window being reconstructed. Thus, training and evaluation use different time windows of the same collection of trajectories.

Within each protocol and split, SLD and all compared models use identical trajectory assignments, segment start indices, normalization constants and checkpoint selection rules. Supplementary Table~\ref{tab:splits} summarizes the split structure.
% AUTHOR CHECK REQUIRED [REV-02]: Include the exact trajectory assignments and
% training seeds for the ten folds and five repeated partitions in the release.

\subsection{Baselines}
\label{subsec:baselines}

We compare SLD with two controlled baselines that test complementary rules for updating the state. Both use the same input and output resolutions, temporal and spatial refinement factors, normalization, periodic boundary treatment, wavelet reconstruction interface and training objective as SLD.

\paragraph{Unstructured Controlled Latent Dynamics.}
UCLD uses a convolutional vector field to update a latent state using the coarse guidance. It shares SLD's temporal refinement, guidance construction, state initialization, midpoint update, Haar wavelet synthesis and skip branch; its width is chosen to give a similar parameter count. The latent vector field is replaced by a fully learned convolutional vector field conditioned on the same time-dependent guidance:
\begin{equation}
    F_{\rm UCLD}(\mathbf{h}_k,\mathbf{U}_k)
    =
    \gamma\,
    \mathcal{N}_{\phi}
    ([\mathbf{h}_k,\mathbf{U}_k]),
    \label{eq:ucld_field}
\end{equation}

where \(\mathcal N_\phi\) is a stack of circularly padded convolutional blocks and \(\gamma\) is a learned scalar. This comparison changes the complete latent vector field, replacing both the gated response and the structured spatial operator. The separate ablation sets \(\mathcal S_\psi=0\) and retains the gated response.

\paragraph{Convolutional LSTM.}

The convolutional long short-term memory (ConvLSTM) baseline uses one recurrent layer with circular spatial padding. The same temporal refinement architecture as SLD produces 100 coarse frames, which are processed sequentially:
\begin{equation}
    (\mathbf{h}_k,\mathbf{c}_k)
    =
    \operatorname{ConvLSTM}
    (\widetilde{\mathbf{x}}_k,\mathbf{h}_{k-1},\mathbf{c}_{k-1}),
    \label{eq:convlstm_update}
\end{equation}
where \(\mathbf{h}_k\) and \(\mathbf{c}_k\) are the hidden state and memory state, respectively. At each output time, the hidden state and refined input frame are decoded through the wavelet and skip branches:
\begin{equation}
    \widehat{\mathbf{y}}^{\rm ConvLSTM,norm}_k
    =
    W_{\theta}^{\rm ConvLSTM}(\mathbf{h}_k)
    +
    W_{\rm skip}^{\rm ConvLSTM}(\widetilde{\mathbf{x}}_k).
    \label{eq:convlstm_decode}
\end{equation}

The summed output is transformed to physical units using the high-resolution training statistics.

Supplementary Notes~\ref{supp:model_details} and~\ref{supp:baseline_details} give the model configurations and parameter counts; Supplementary Note~\ref{supp:training_details} gives optimization and checkpoint settings.

\subsection{Evaluation metrics}
\label{subsec:metrics}

All metrics are computed on high-resolution predictions and targets in physical units. We report relative \(L_2\) error and mean absolute error (MAE) to summarize reconstruction accuracy. For a test set of trajectory segments \(\mathcal{D}_{\rm test}\), the relative \(L_2\) error is
\begin{equation}
    E_{L_2}
    =
    \frac{1}{|\mathcal{D}_{\rm test}|}
    \sum_{(\mathbf{x},\mathbf{y})\in\mathcal{D}_{\rm test}}
    \frac{
    \|\widehat{\mathbf{y}}-\mathbf{y}\|_2
    }{
    \|\mathbf{y}\|_2
    },
    \label{eq:rel_l2}
\end{equation}
and MAE is
\begin{equation}
    E_{\rm MAE}
    =
    \frac{1}{|\mathcal{D}_{\rm test}|}
    \sum_{(\mathbf{x},\mathbf{y})\in\mathcal{D}_{\rm test}}
    \frac{1}{N_{\rm dof}}
    \|\widehat{\mathbf{y}}-\mathbf{y}\|_1 ,
    \label{eq:mae}
\end{equation}
where \(N_{\rm dof}\) is the number of scalar values in one target segment. Both norms are computed over all output times, physical channels and high-resolution spatial grid points for each test segment before averaging over the test set.

We also evaluate the spatial distribution of Fourier power, the change in relative \(L_2\) error over time, and reconstruction error in regions grouped by their reference gradient magnitude. For a field \(q\), the two-dimensional Fourier power is
\begin{equation}
    P_q(\boldsymbol{\kappa})
    =
    |\mathcal{F}[q](\boldsymbol{\kappa})|^2 ,
\end{equation}
and the radial spectrum \(E_q(k)\) is obtained by averaging \(P_q(\boldsymbol{\kappa})\) over annular bins labelled by radial wavenumber \(k\). The spectral relative error is computed as
\begin{equation}
    E_{\rm spec}(k)
    =
    \frac{
    |E_{\widehat{q}}(k)-E_q(k)|
    }{
    E_q(k)+\varepsilon_{\rm spec}
    },
    \qquad
    \varepsilon_{\rm spec}=10^{-12}.
    \label{eq:spectral_error}
\end{equation}
Spectra are averaged over output times, physical channels and test segments unless otherwise stated.
% AUTHOR CHECK REQUIRED [REV-03]: State FFT normalization, annular-bin boundaries,
% treatment of the zero mode, and whether spectral errors are evaluated before
% or after averaging the spectra.

For RMSE grouped by gradient magnitude, we first compute the reference gradient magnitude
\begin{equation}
    g_{k,i,j}
    =
    \left[
    \sum_{c=1}^{C}
    \left(
    |D_x^{\rm HR} y_{k,c,i,j}|^2
    +
    |D_y^{\rm HR} y_{k,c,i,j}|^2
    \right)
    \right]^{1/2},
    \label{eq:gradient_magnitude}
\end{equation}

where \(D_x^{\rm HR}\) and \(D_y^{\rm HR}\) denote the periodic difference operators used only for metric evaluation on the high-resolution grid. Ground-truth space--time points are pooled over test segments and partitioned into deciles of \(g_{k,i,j}\). RMSE is then computed within each decile:
\begin{equation}
    {\rm RMSE}_m
    =
    \left[
    \frac{1}{|\Omega_m|C}
    \sum_{(k,i,j)\in\Omega_m}
    \sum_{c=1}^{C}
    \left(
    \widehat{y}_{k,c,i,j}
    -
    y_{k,c,i,j}
    \right)^2
    \right]^{1/2},
    \label{eq:gradient_rmse}
\end{equation}

where \(\Omega_m\) is the set of space--time points in gradient decile \(m\). Each decile contains approximately one tenth of the reference points, ordered from the smallest to the largest gradients. The resulting RMSE values show how reconstruction accuracy changes from smooth regions to steep transitions.

\section*{Acknowledgments}
This work was partially supported by the National Natural Science Foundation of China (72495131, 82441027), Guangdong Provincial Key Laboratory of Mathematical Foundations for Artificial Intelligence (2023B1212010001), Shenzhen Stability Science Program, and the Shenzhen Science and Technology Program under grant no. JCYJ20250604141043020.

\section*{Data availability}
The datasets generated in this study are publicly available on Hugging Face at \url{https://huggingface.co/datasets/ChenhaoSI/SLD-dataset}.

\section*{Code availability}
The code for dataset generation and for training and evaluating SLD is publicly available on GitHub at \url{https://github.com/S-Chenhao/SLD}.

\section*{Author contributions}
C.S. and M.Y. conceived the study. C.S. and Z.Q. developed the method, implemented the models, generated the datasets, performed the experiments and prepared the figures. C.S., Z.Q. and M.Y. analysed the results and wrote the manuscript. S.-T.Y. provided scientific guidance. R.W. and M.Y. supervised the project and revised the manuscript. C.S. and Z.Q. contributed equally to this work.

\section*{Competing interests}

The authors declare no competing interests.

\bibliography{sn-bibliography_ai4pde}% common bib file
\section{Supplementary}
% Supplementary numbering.
\setcounter{table}{0}
\setcounter{figure}{0}
\renewcommand{\thetable}{S\arabic{table}}
\renewcommand{\thefigure}{S\arabic{figure}}
\renewcommand{\tablename}{Supplementary Table}
\renewcommand{\figurename}{Supplementary Figure}
% ============================================================
\subsection{Notation}
\label{supp:notation}

Supplementary Table~\ref{tab:notation} summarizes the main symbols used in
the manuscript and Supplementary Information.

\begin{table}[t]
\centering
\small
\caption{
Main notation used in the manuscript and Supplementary Information.
}
\label{tab:notation}
\begin{tabular}{ll}
\toprule
Symbol & Description \\
\midrule
\(\mathbf{x}_{1:T_{\rm in}}\) & Low-resolution input trajectory segment \\
\(\mathbf{y}_{1:T_{\rm out}}\) & High-resolution target trajectory segment \\
\(\widehat{\mathbf{y}}_{1:T_{\rm out}}\) & Reconstructed high-resolution trajectory segment \\
\(T_{\rm in}\) & Number of low-resolution input frames, \(T_{\rm in}=25\) \\
\(T_{\rm out}\) & Number of high-resolution output frames, \(T_{\rm out}=100\) \\
\(r\) & Output-to-input sequence-length ratio, \(r=T_{\rm out}/T_{\rm in}=4\) \\
\(s\) & Spatial refinement factor, \(s=8\) \\
\(\tau\) & Low-resolution segment start index \\
\(C\) & Number of physical state variables, \(C=2\) in all systems \\
\(C_h\) & Number of latent/guidance channels in SLD, \(C_h=64\) \\
\(\widetilde{\mathbf{x}}_k\) & Refined coarse frame at output index \(k\) \\
\(\mathbf{U}_k\) & Guidance feature on the low-resolution grid at output index \(k\) \\
\(\mathbf{h}_k\) & Latent state after update \(k\) \\
\(G_\theta\) & Learned gated response in the SLD latent vector field \\
\(\mathcal{S}_{\psi}\) & Structured spatial operator acting on the latent state \\
\(D_x,D_y\) & Horizontal and vertical central differences with unit latent spacing \\
\(D_x^{\rm HR},D_y^{\rm HR}\) & Periodic high-resolution difference operators used for metric evaluation \\
\(\Delta\) & Five-point Laplacian with unit latent spacing \\
\(\boldsymbol{\alpha},\boldsymbol{\beta}_x,\boldsymbol{\beta}_y\) & Learnable channel-wise coefficients of \(\mathcal{S}_{\psi}\) \\
\(W_\theta\) & Haar wavelet synthesis branch for the latent state \\
\(W_{\rm skip}\) & Haar wavelet synthesis branch for the refined coarse input \\
\(S_1,\ldots,S_5\) & Fixed trajectory segments ordered by start index \\
\(\mathcal{L}_{\rm data}\) & Supervised high-resolution reconstruction loss \\
\(\mathcal{L}_{\rm phy}\) & PDE residual loss evaluated on reconstructed HR fields \\
UCLD & Unstructured Controlled Latent Dynamics baseline \\
\(B\) & Batch size \\
\(\boldsymbol{\mu}_{\rm LR},\boldsymbol{\sigma}_{\rm LR}\) & Channel-wise low-resolution mean and standard deviation of the training data \\
\(\boldsymbol{\mu}_{\rm HR},\boldsymbol{\sigma}_{\rm HR}\) & Channel-wise high-resolution mean and standard deviation of the training data \\
ConvLSTM & Convolutional long short-term memory baseline \\
\bottomrule
\end{tabular}
\end{table}

% ============================================================
\subsection{PDE systems, data generation and segment construction}
\label{supp:data_generation}

We generated datasets for three two-dimensional systems: Gray--Scott reaction--diffusion, the complex Ginzburg--Landau system and the vector Burgers system. Each dataset contains ten high-resolution trajectories generated from distinct initial conditions. We then downsampled each trajectory in time and space to construct the low-resolution inputs shared by all models.

All systems use periodic \(256\times256\) grids and two state variables, giving stored high-resolution arrays of shape \([2,T_{\rm HR},256,256]\). Fourfold temporal and eightfold spatial downsampling produce low-resolution arrays of shape \([2,T_{\rm LR},32,32]\). These arrays provide clean low-resolution inputs for all reported experiments.

\subsubsection{Gray--Scott reaction--diffusion system}
\label{supp:gs_generation}

The two-dimensional Gray--Scott system is written as
\begin{align}
    \frac{\partial A}{\partial t}
    &=
    D_A \Delta A
    -
    A B^2
    +
    f(1-A),\\
    \frac{\partial B}{\partial t}
    &=
    D_B \Delta B
    +
    A B^2
    -
    (k+f)B,
\end{align}
where \(A\) and \(B\) are the two chemical concentration fields. We use
\[
D_A=0.16,\qquad
D_B=0.08,\qquad
f=0.06,\qquad
k=0.062.
\]
The simulations are performed on a periodic \(256\times256\) grid with unit grid spacing \(\Delta x=1\) and time step \(\Delta t=1\). Time integration uses a fourth-order Runge--Kutta method.

The Laplacian is discretized by the fourth-order periodic finite-difference stencil
\begin{equation}
\begin{aligned}
\Delta_h q_{i,j}
=
\frac{1}{\Delta x^2}
\bigg[
&-5q_{i,j}
+
\frac{4}{3}
\left(
q_{i+1,j}+q_{i-1,j}+q_{i,j+1}+q_{i,j-1}
\right)
\\
&-
\frac{1}{12}
\left(
q_{i+2,j}+q_{i-2,j}+q_{i,j+2}+q_{i,j-2}
\right)
\bigg],
\end{aligned}
\label{eq:supp_laplacian_4th}
\end{equation}

with periodic wrapping in both spatial directions.

For each initial-condition seed \(m=1,\ldots,10\), the initial condition is constructed as
\begin{align}
A_0(i,j)
&=
(1-\rho)
+
\rho\,\xi_A(i,j),\\
B_0(i,j)
&=
\rho\,\xi_B(i,j),
\end{align}

Here, \(\xi_A\) and \(\xi_B\) are independent uniform random fields generated using seed \(m\), and \(\rho=0.1\). We then insert a centered square perturbation. Let \(N=256\), \(N_2=N/2\), and let \(r_{\rm patch}\) denote the integer half-width of the square. For \((i,j)\in[N_2-r_{\rm patch},N_2+r_{\rm patch})\times[N_2-r_{\rm patch},N_2+r_{\rm patch})\), we set
\[
A_0(i,j)=0.50,\qquad B_0(i,j)=0.25.
\]

Each Gray--Scott trajectory is simulated for 15,000 internal time steps. The initial frame is stored, and one frame is then stored every five internal steps, giving
\[
T_{\rm HR}=3001.
\]
The stored high-resolution trajectory has shape \([2,3001,256,256]\). Temporal downsampling keeps every fourth stored high-resolution frame, giving \(T_{\rm LR}=751\). Spatial downsampling is performed by applying three levels of anti-aliased pyramid downsampling with a binomial filter, giving \(256/2^3=32\) grid points per direction. The resulting low-resolution trajectory has shape \([2,751,32,32]\).

\subsubsection{Complex Ginzburg--Landau system}
\label{supp:gl_generation}

The two-dimensional complex Ginzburg--Landau system is represented by its real and imaginary components \(u\) and \(v\):
\begin{align}
    \frac{\partial u}{\partial t}
    &=
    \epsilon \Delta u
    +
    \nu u
    -
    (u^2+v^2)(\gamma_{\rm re}u-\gamma_{\rm im}v),\\
    \frac{\partial v}{\partial t}
    &=
    \epsilon \Delta v
    +
    \nu v
    -
    (u^2+v^2)(\gamma_{\rm re}v+\gamma_{\rm im}u).
\end{align}
We use
\[
\epsilon=0.004,\qquad
\nu=10.0,\qquad
\gamma_{\rm re}=10.0,\qquad
\gamma_{\rm im}=15.0.
\]
The computational domain is \([-1,1)\times[-1,1)\), discretized by a periodic \(256\times256\) grid. Thus \(\Delta x=2/256\). The same fourth-order periodic finite-difference Laplacian in Eq.~\eqref{eq:supp_laplacian_4th} is used, with the appropriate \(\Delta x\). Time integration uses a fourth-order Runge--Kutta method with internal time step
\[
\Delta t=2.0\times 10^{-4}.
\]

The deterministic part of the initial condition is
\begin{align}
u_0(x,y)
&=
10\,y\,\exp[-25(x^2+y^2)],\\
v_0(x,y)
&=
10\,x\,\exp[-25(x^2+y^2)].
\end{align}
For each initial-condition seed \(m=1,\ldots,10\), independent Gaussian perturbations are added to \(u_0\) and \(v_0\):
\begin{align}
u_0(x,y)
&\leftarrow
u_0(x,y)+\rho\,\eta_u(x,y),\\
v_0(x,y)
&\leftarrow
v_0(x,y)+\rho\,\eta_v(x,y),
\end{align}
where \(\eta_u\) and \(\eta_v\) are standard normal random fields generated using seed \(m\), and \(\rho=0.1\).

Each trajectory stores \(T_{\rm HR}=1200\) high-resolution frames. The initial frame is stored, and subsequent stored frames are separated by 84 internal Runge--Kutta steps, corresponding to an interval between stored frames of
\[
84\times 2.0\times10^{-4}=1.68\times10^{-2}.
\]
The stored high-resolution trajectory has shape \([2,1200,256,256]\). Temporal downsampling keeps every fourth stored frame, producing \(T_{\rm LR}=300\). Spatial downsampling uses the same three levels of anti-aliased pyramid downsampling as for Gray--Scott, reducing each spatial dimension from 256 to 32. The resulting low-resolution trajectory has shape \([2,300,32,32]\).

\subsubsection{Two-dimensional vector Burgers system}
\label{supp:burgers_generation}

The vector Burgers system is
\begin{align}
    \frac{\partial u}{\partial t}
    +
    u\frac{\partial u}{\partial x}
    +
    v\frac{\partial u}{\partial y}
    &=
    \nu \Delta u,\\
    \frac{\partial v}{\partial t}
    +
    u\frac{\partial v}{\partial x}
    +
    v\frac{\partial v}{\partial y}
    &=
    \nu \Delta v,
\end{align}
with viscosity
\[
\nu=0.005.
\]
The domain is \([-1,1)\times[-1,1)\), equivalently a periodic square of side length \(L=2\), discretized on a \(256\times256\) grid. The time step is
\[
\Delta t=10^{-3}.
\]
Spatial derivatives are computed spectrally using the discrete Fourier transform. Nonlinear advection terms are dealiased using the standard two-thirds truncation mask. The system is advanced in time using a fourth-order Runge--Kutta method.

For each initial-condition seed \(m=1,\ldots,10\), we first generate independent smooth Fourier fields. On the domain \([-L/2,L/2)^2\), define
\[
\phi_{ij}(x,y)
=
2\pi\left(\frac{i x}{L}+\frac{j y}{L}\right),
\]
with modes \(i,j\in\{-4,\ldots,4\}\), excluding \((i,j)=(0,0)\). The base initial fields are
\begin{align}
u_{\rm base}(x,y)
&=
\sum_{i=-4}^{4}\sum_{j=-4}^{4}
\left[
a^{(u)}_{ij}\sin\phi_{ij}(x,y)
+
b^{(u)}_{ij}\cos\phi_{ij}(x,y)
\right],\\
v_{\rm base}(x,y)
&=
\sum_{i=-4}^{4}\sum_{j=-4}^{4}
\left[
a^{(v)}_{ij}\sin\phi_{ij}(x,y)
+
b^{(v)}_{ij}\cos\phi_{ij}(x,y)
\right],
\end{align}
where the zero mode is omitted and the coefficients are independently sampled from a standard normal distribution. Each base field is rescaled to \([0.1,1.1]\) using the affine transformation implemented in the data generator. We then add smooth perturbations:
\begin{align}
u_0(x,y)
&=
u_{\rm base}(x,y)
+
\rho\,\zeta_u(x,y),\\
v_0(x,y)
&=
v_{\rm base}(x,y)
+
\rho\,\zeta_v(x,y),
\end{align}
where \(\rho=0.1\). The perturbation fields \(\zeta_u\) and \(\zeta_v\) are obtained by Gaussian filtering white noise with \(\sigma=4.0\), then subtracting the spatial mean and dividing by the spatial standard deviation. The final initial fields are clipped to \([-0.2,1.4]\). For trajectory \(m\), the seed for the base fields is \(2026+m\), and the seed for the perturbations is \(10000+m\).

Each Burgers trajectory stores \(T_{\rm HR}=1200\) high-resolution frames, including the initial frame and one frame after each Runge--Kutta step. The stored high-resolution trajectory has shape \([2,1200,256,256]\). Temporal downsampling keeps every fourth stored frame, giving \(T_{\rm LR}=300\). Spatial downsampling is performed by centered Fourier cropping from \(256\times256\) to \(32\times32\), followed by the amplitude correction
\[
q_{\rm LR}
\leftarrow
q_{\rm LR}
\frac{32^2}{256^2}.
\]
The resulting low-resolution trajectory has shape \([2,300,32,32]\). The summary of generated datasets is shown in Supplementary
Table~\ref{tab:dataset_summary}.

\begin{table}[t]
\centering
\small
\caption{
Summary of generated datasets. Each system contains ten trajectories from different initial-condition seeds. HR and LR denote high- and low-resolution trajectories.
}
\label{tab:dataset_summary}
\begin{tabular}{lcccc}
\toprule
System
& Spatial Domain
& HR shape
& LR shape
& Downsampling\\
\midrule
Gray--Scott
& \([0,256)^2\)
& \([2,3001,256,256]\)
& \([2,751,32,32]\)
&\(8\times\) spatial, \(4\times\) temporal \\
Ginzburg--Landau
& \([-1,1)^2\)
& \([2,1200,256,256]\)
& \([2,300,32,32]\)
&\(8\times\) spatial, \(4\times\) temporal \\
Burgers
& \([-1,1)^2\)
& \([2,1200,256,256]\)
& \([2,300,32,32]\)
& \(8\times\) spatial, \(4\times\) temporal \\
\bottomrule
\end{tabular}
\end{table}

\subsubsection{Trajectory segment construction}
\label{supp:segment_construction}

We train and evaluate all models on segments sampled from the stored trajectories. Each input contains 25 low-resolution frames with shape
$$
[25,2,32,32],
$$
and each target contains the corresponding 100 high-resolution frames with shape
$$
[100,2,256,256].
$$
Using zero-based stored indices, a low-resolution segment beginning at index \(\tau\) is paired with its target as
\[
\mathbf{x}_j=\mathbf{x}^{\rm LR}_{\tau+j},\quad j=0,\ldots,24,
\qquad
\mathbf{y}_k=\mathbf{y}^{\rm HR}_{4\tau+k},\quad k=0,\ldots,99.
\]
The input and target windows share their start time. The observed coarse frames correspond to target offsets \(0,4,\ldots,96\), and the last three targets are short extrapolations beyond the final input time. All 100 target frames are reconstructed from the complete input window. The physical start time is \(4\tau\,\delta t_{\rm HR}\), where the spacing between stored HR frames is \(\delta t_{\rm HR}=5\) for Gray--Scott, \(0.0168\) for Ginzburg--Landau and \(0.001\) for Burgers. The protocol for unseen initial conditions uses four windows beginning at low-resolution indices \(100,150,200\) and \(250\). The protocol for later times defines five ordered segments by
\[
\operatorname{start}_{\rm LR}(S_1,\ldots,S_5)
=
(100,150,200,225,250).
\]

Each \(S_i\) contains 25 low-resolution input frames and 100 high-resolution target frames. The fourfold factor describes both the ratio of stored sampling rates and the ratio of sequence lengths. The temporal module produces 100 refined frames on the low-resolution grid, as described in Supplementary Note~\ref{supp:temporal_refinement_details}. Within each protocol, SLD and all baselines use identical window definitions.

For unseen initial conditions, trajectories are partitioned by identity into seven for training, two for validation and one for testing in each fold.

For reconstruction at later times, all ten trajectories provide \(S_1\)--\(S_3\) high-resolution targets for parameter optimization. The later segments \(S_4\) and \(S_5\) supply validation and test targets. Within each repeated partition, five trajectories provide the validation windows and the other five provide the test windows. Every model receives the low-resolution inputs from the later windows being evaluated.

\begin{table}[t]
\centering
\small
\caption{
Segment start indices used for reconstruction at later times. LR and HR denote the start indices in the low- and high-resolution trajectories. Each segment contains 25 LR frames and 100 HR frames. The protocol for unseen initial conditions uses starts 100, 150, 200 and 250.
}
\label{tab:segment_times}
\begin{tabular}{lccc}
\toprule
Segment & LR start index & HR start index & Usage \\
\midrule
\(S_1\) & 100 & 400 & training/evaluation at early times \\
\(S_2\) & 150 & 600 & training/evaluation at early times \\
\(S_3\) & 200 & 800 & training/evaluation at early times \\
\(S_4\) & 225 & 900 & evaluation in the first later window \\
\(S_5\) & 250 & 1000 & evaluation in the second later window \\
\bottomrule
\end{tabular}
\end{table}

\begin{table}[t]
\centering
\small
\caption{Summary of data partitioning in the two evaluation protocols. Partitions are defined by trajectory identity; the reported aggregation unit is specified separately for each protocol.
}
\label{tab:splits}
\begin{tabular}{llll}
\toprule
Protocol & Training data & Validation data & Test data \\
\midrule
Unseen initial condition
& 7 trajectories
& 2 trajectories
& 1 held-out trajectory \\
Later-time reconstruction
& \(S_1\)--\(S_3\) from all 10 traj
& \(S_4\)--\(S_5\) from val traj
& \(S_4\)--\(S_5\) from test traj \\
\bottomrule
\end{tabular}
\end{table}

% ============================================================
\subsection{Model architecture details}
\label{supp:model_details}

SLD uses \(C_h=64\) latent channels, temporal factor \(r=4\) and spatial factor \(s=8\) in the main experiments. Supplementary Table~\ref{tab:sld_architecture} summarizes the tensor shapes and trainable parameter counts. The forward pass consists of the following steps:
\begin{enumerate}
\item Normalize the 25 input frames and resample them to 100 positions. A temporal convolutional network adds a learned correction, producing \(\widetilde{\mathbf{x}}\in\mathbb{R}^{100\times B\times2\times32\times32}\).
\item Encode each refined frame as \(\mathbf{U}_k\in\mathbb{R}^{B\times64\times32\times32}\), and initialize \(\mathbf{h}_0=P_\theta(\mathbf{U}_1)\).
\item For \(k=1,\ldots,100\), compute \(\mathbf{h}_k\) from \(\mathbf{h}_{k-1}\) and \(\mathbf{U}_k\) with an explicit midpoint update. Each evaluation of the vector field adds the learned gated response and the structured spatial term.
\item Decode \(\mathbf{h}_k\) and \(\widetilde{\mathbf{x}}_k\) with two separate Haar synthesis branches. Add their outputs and transform the result to physical units, giving \(\widehat{\mathbf{y}}\in\mathbb{R}^{100\times B\times2\times256\times256}\).
\end{enumerate}
The encoder, latent update and synthesis branches share their respective parameters across output indices. The two synthesis branches have separate learned parameters.

\begin{table}[t]
\centering
\small
\caption{
SLD architecture used in the main experiments. The same configuration is used
for Gray--Scott, Ginzburg--Landau and Burgers systems and for all main
evaluation protocols.
}
\label{tab:sld_architecture}
\begin{tabular}{ll}
\toprule
Component & Setting \\
\midrule
Input segment & \(25\) LR frames, \(2\times32\times32\) per frame \\
Output segment & \(100\) HR frames, \(2\times256\times256\) per frame \\
Temporal upsampling & \(4\times\), linear interpolation plus temporal residual refinement \\
Spatial upsampling & \(8\times\), three-level Haar wavelet synthesis \\
Latent/guidance width & \(C_h=64\) \\
Guidance encoder & circular \(3\times3\) convolution, \(2\to64\) channels \\
Initial latent projection & \(1\times1\) convolution, \(64\to64\) channels \\
Gate path & depthwise circular \(7\times7\), GroupNorm(1), \(1\times1\), sigmoid \\
Candidate path & depthwise circular \(7\times7\), GroupNorm(1), \(1\times1\) expansion\\
Structured latent response & depthwise \(D_x,D_y,\Delta\) with learnable channel-wise coefficients \\
Latent update & fixed-step explicit midpoint update \\
Reconstruction head & \(3\times3\) convolution + inverse Haar synthesis \\
Skip branch & same wavelet synthesis form applied to refined LR input \\
\bottomrule
\end{tabular}
\end{table}

\subsubsection{Input normalization}
\label{supp:input_normalization}
% AUTHOR CHECK REQUIRED [REV-09]: Check the archived inverse normalization:
% forward normalization divides by sigma+1e-12 but the displayed inverse uses
% sigma. Report the implemented convention consistently.

Inputs and targets are normalized separately for each physical channel. Here, \(\boldsymbol{\mu}_{\rm LR}\) and \(\boldsymbol{\sigma}_{\rm LR}\) are the channel means and standard deviations of the low-resolution training data, and \(\boldsymbol{\mu}_{\rm HR}\) and \(\boldsymbol{\sigma}_{\rm HR}\) are the corresponding quantities for the high-resolution training data. The normalized input is
\begin{equation}
    \mathbf{x}^{\rm norm}
    =
    \frac{
    \mathbf{x}
    -
    \boldsymbol{\mu}_{\rm LR}
    }{
    \boldsymbol{\sigma}_{\rm LR}
    +
    10^{-12}
    }.
\end{equation}

The high-resolution training targets are normalized as
\begin{equation}
    \mathbf{y}^{\rm norm}
    =
    \frac{
    \mathbf{y}
    -
    \boldsymbol{\mu}_{\rm HR}
    }{
    \boldsymbol{\sigma}_{\rm HR}
    +
    10^{-12}
    }.
\end{equation}
The final prediction is transformed back to physical scale by
\begin{equation}
    \widehat{\mathbf{y}}
    =
    \widehat{\mathbf{y}}^{\rm norm}
    \boldsymbol{\sigma}_{\rm HR}
    +
    \boldsymbol{\mu}_{\rm HR}.
\end{equation}

For each channel, the mean and standard deviation are calculated over samples, time and both spatial dimensions of the relevant training split. SLD and all baselines use these same constants to normalize the training, validation and test data within that split.

\subsubsection{Temporal refinement and guidance encoding}
\label{supp:temporal_refinement_details}
% AUTHOR CHECK REQUIRED [REV-10]: Verify the 24j/99 interpolation coordinates
% against the archived code. Input frames lie at HR offsets 0,4,...,96, so the
% interpolated coordinates correspond to 96j/99 while target offsets are j.

The normalized input segment has shape
\[
    [T_{\rm in},B,2,H,W]=[25,B,2,32,32],
\]
Here, \(B\) is the batch size. The symbols \(\mathbf{x}\) and \(\widetilde{\mathbf{x}}\) denote normalized low-resolution fields. We include the batch dimension when specifying complete tensor shapes and omit it in per-frame maps such as \(E_\theta\), \(P_\theta\) and the synthesis branches.

The temporal module linearly interpolates the input along the sequence dimension from 25 to 100 positions, producing
\[
    \overline{\mathbf{x}}_{1:T_{\rm out}}
    \in
\mathbb{R}^{100\times B\times 2\times 32\times 32}.
\]
For zero-based output index \(j\in\{0,\ldots,99\}\), interpolation samples the input sequence at coordinate \(24j/99\), with input frames indexed from 0 to 24. Thus the first and last interpolated positions equal the first and last input frames. These 100 positions form the feature sequence used to condition reconstruction at the output indices.

At each spatial location, a temporal network predicts a correction to the interpolated sequence. It projects the two physical channels to 32 feature channels, applies two temporal convolutions with 32 channels and kernel size 5, each followed by GELU, and projects back to two channels. Zero padding preserves the sequence length. The correction is added to the interpolated sequence to obtain \(\widetilde{\mathbf{x}}_{1:T_{\rm out}}\). The same network parameters are used at all spatial locations.

The refined low-resolution frames are encoded into guidance features by a single circularly padded \(3\times3\) convolution:
\begin{equation}
    \mathbf{U}_k
    =
    E_\theta(\widetilde{\mathbf{x}}_k),
    \qquad
    E_\theta:\mathbb{R}^{2\times32\times32}
    \rightarrow
    \mathbb{R}^{64\times32\times32}.
\end{equation}
The initial latent state is obtained from the first guidance feature using a \(1\times1\) projection,
\begin{equation}
    \mathbf{h}_0
    =
    P_\theta(\mathbf{U}_1),
    \qquad
    P_\theta:\mathbb{R}^{64\times32\times32}
    \rightarrow
    \mathbb{R}^{64\times32\times32}.
\end{equation}
The first explicit midpoint update takes \(\mathbf{h}_0\) and \(\mathbf{U}_1\) as inputs and produces \(\mathbf{h}_1\), which is decoded as the first output.

\subsubsection{Guided latent dynamics}
\label{supp:guided_latent_details}

The latent state has shape
\[
    \mathbf{h}\in\mathbb{R}^{B\times64\times32\times32}.
\]
Given a latent state \(\mathbf{h}\) and guidance \(\mathbf{U}\), the vector field returns a tensor of the same shape:
\begin{equation}
    F_\theta(\mathbf{h},\mathbf{U})
    =
    G_\theta(\mathbf{h},\mathbf{U})
    +
    \mathcal{S}_{\psi}(\mathbf{h}).
\end{equation}
The learned response \(G_\theta\) uses reset and update gates. The concatenation \([\mathbf{h},\mathbf{U}]\) has 128 channels. A depthwise circular \(7\times7\) convolution filters each channel separately; group normalization with one group and a \(1\times1\) convolution then produce the gate inputs. Applying a sigmoid gives
\begin{equation}
    [\mathbf{r},\mathbf{z}]
    =
    \sigma\!\left(
    \mathcal{G}_\theta([\mathbf{h},\mathbf{U}])
    \right),
\end{equation}
where each gate has the same shape as \(\mathbf{h}\) and entries between zero and one. Group normalization with one group normalizes over all channels and spatial positions within each sample.

To construct the candidate state, we multiply \(\mathbf h\) by the reset gate \(\mathbf r\) and concatenate the result with \(\mathbf U\), giving 128 channels. A depthwise circular \(7\times7\) convolution followed by group normalization with one group processes this tensor. A \(1\times1\) convolution expands it to 256 channels, followed by GELU and a \(1\times1\) projection to 64 channels:
\begin{equation}
    \widetilde{\mathbf{h}}
    =
    \mathcal{C}_\theta
    ([\mathbf{r}\odot\mathbf{h},\mathbf{U}]).
\end{equation}
The learned response is
\begin{equation}
    G_\theta(\mathbf{h},\mathbf{U})
    =
    (1-\mathbf{z})
    \odot
    (\widetilde{\mathbf{h}}-\mathbf{h}).
\end{equation}
Thus \(1-\mathbf{z}\) weights the difference between the candidate and current states at each channel and grid point. Both gates and the candidate are recomputed for each evaluation of \(F_\theta\) in the midpoint update below. Here, \(\odot\) denotes element-wise multiplication.

\subsubsection{Structured latent response}
\label{supp:structured_response_details}

The structured latent response is
\begin{equation}
    \mathcal{S}_{\psi}(\mathbf{h})
    =
    \boldsymbol{\alpha}\odot\Delta \mathbf{h}
    +
    \boldsymbol{\beta}_x\odot D_x \mathbf{h}
    +
    \boldsymbol{\beta}_y\odot D_y \mathbf{h},
\end{equation}

Here, \(D_x\), \(D_y\) and \(\Delta\) are fixed depthwise \(3\times3\) operators applied independently to each latent channel. The horizontal and vertical central-difference kernels are
\begin{equation}
K_x
=
\begin{bmatrix}
0 & 0 & 0\\
-1/2 & 0 & 1/2\\
0 & 0 & 0
\end{bmatrix},
\qquad
K_y
=
\begin{bmatrix}
0 & -1/2 & 0\\
0 & 0 & 0\\
0 & 1/2 & 0
\end{bmatrix},
\end{equation}
and the five-point Laplacian kernel is
\begin{equation}
K_\Delta
=
\begin{bmatrix}
0 & 1 & 0\\
1 & -4 & 1\\
0 & 1 & 0
\end{bmatrix}.
\end{equation}
These fixed kernels act on a latent grid with unit spacing. The learned coefficients \(\boldsymbol{\alpha}\), \(\boldsymbol{\beta}_x\) and \(\boldsymbol{\beta}_y\) set the response scale independently for each channel. Each of the three coefficient vectors has 64 entries, one per latent channel. All entries are initialized to zero and learned jointly with the other model parameters. The Laplacian term responds to local curvature, and the two central-difference terms respond to directional variation in the latent features.

\subsubsection{Latent update}
\label{supp:latent_update_details}
The latent state is advanced with step size \(\Delta t=1\) in network time, so one update corresponds to one output index. The physical sampling intervals are specified in Supplementary Note~\ref{supp:data_generation}. For \(k=1,\ldots,T_{\rm out}\), the following two-stage explicit midpoint update uses the same guidance \(\mathbf U_k\) in both evaluations of the vector field:
\begin{align}
    \mathbf{k}_1
    &=
    F_\theta(\mathbf{h}_{k-1},\mathbf{U}_k),\\
    \mathbf{h}_{k-\frac{1}{2}}
    &=
    \mathbf{h}_{k-1}
    +
    \frac{1}{2}\Delta t\,\mathbf{k}_1,\\
    \mathbf{k}_2
    &=
    F_\theta(\mathbf{h}_{k-\frac{1}{2}},\mathbf{U}_k),\\
    \mathbf{h}_k
    &=
    \mathbf{h}_{k-1}
    +
    \Delta t\,\mathbf{k}_2.
\end{align}

The resulting states are stacked in temporal order. For decoding, the time and batch dimensions are combined, so the same synthesis branches process all output frames in one batched operation. No temporal mixing is performed by the synthesis branches.

\subsubsection{Wavelet synthesis and skip reconstruction}
\label{supp:wavelet_details}

SLD uses two Haar wavelet synthesis branches. The latent branch takes \(\mathbf h_k\in\mathbb R^{64\times32\times32}\), and the skip branch takes the refined coarse frame \(\widetilde{\mathbf x}_k\in\mathbb R^{2\times32\times32}\). Each branch produces a two-channel \(256\times256\) field, and their outputs are added:
\begin{equation}
    \widehat{\mathbf{y}}_k^{\rm norm}
    =
    W_\theta(\mathbf{h}_k)
    +
    W_{\rm skip}(\widetilde{\mathbf{x}}_k).
\end{equation}

Each head uses a weight-normalized circular \(3\times3\) convolution to predict the coefficients for three levels of Haar synthesis. The \(LL\) band supplies the coarsest approximation, and the \(LH\), \(HL\) and \(HH\) bands supply detail at each level. Inverse Haar synthesis doubles each spatial dimension at every level, taking the \(32\times32\) grid through \(64\times64\) and \(128\times128\) to \(256\times256\). Each branch therefore produces a field of shape \(2\times256\times256\), and their sum is the normalized prediction.

Each branch predicts \(128=2\times8^2\) coefficient channels on the \(32\times32\) grid. The first eight channels are split into the \(LL,LH,HL,HH\) bands, with two channels per band, and synthesized into a \(2\times64\times64\) field. The next 24 channels are split into three groups of eight. A channel-to-space rearrangement (pixel shuffle) with factor 2 converts each group into a two-channel detail band on the \(64\times64\) grid. These bands and the field from the first stage produce a \(2\times128\times128\) field. The last 96 channels are split into three groups of 32 and rearranged with factor 4 into two-channel detail bands on the \(128\times128\) grid. The final synthesis stage produces the \(2\times256\times256\) output. At each stage, the detail bands are ordered as \(LH,HL,HH\), and the output of the previous stage supplies the \(LL\) band. The inverse Haar transform uses
\begin{align}
x_{00}&=(LL+LH+HL+HH)/2, & x_{01}&=(LL-LH+HL-HH)/2,\\
x_{10}&=(LL+LH-HL-HH)/2, & x_{11}&=(LL-LH-HL+HH)/2,
\end{align}
and interleaves \(x_{00},x_{01},x_{10},x_{11}\) on the doubled spatial grid. The synthesized field becomes the \(LL\) input at the next level. The factor \(1/2\) is the normalization for the orthonormal Haar transform.

% ============================================================
% ============================================================
\subsection{Baselines and implementation details}
\label{supp:baseline_details}

The comparisons across all three PDE systems use UCLD, which replaces the complete SLD latent vector field with a learned convolutional field, and ConvLSTM, which uses a discrete recurrent update. The additional Gray--Scott comparison includes neural operators, image super-resolution and physics-informed super-resolution methods, together with an ablation of the structured local response.

All baselines use the same input and output resolutions, refinement factors, trajectory splits, normalization and evaluation metrics as SLD. In the comparisons across the three PDE systems, UCLD and ConvLSTM also share the supervised reconstruction loss, PDE residual loss when enabled, optimizer family, learning rate schedule, batch size and checkpoint selection rule. These two baselines have parameter counts close to SLD; the additional Gray--Scott methods compare a broader range of architectures under the protocol for unseen initial conditions.

For the controlled comparisons, we selected the baseline widths to approximate the trainable parameter count of SLD. SLD, UCLD and ConvLSTM contain 171,874, 173,672 and 178,762 trainable parameters, respectively.

\subsubsection{Main baselines used across the three PDE systems}
\label{supp:main_baselines}

\paragraph{Unstructured Controlled Latent Dynamics (UCLD).}

UCLD uses the same temporal refinement, guidance encoding, latent initialization and reconstruction architecture as SLD, with a latent width of 36. It replaces the complete latent vector field \(G_\theta+\mathcal{S}_\psi\) with
\begin{equation}
    F_{\rm UCLD}(\mathbf{h}_k,\mathbf{U}_k)
    =
    \gamma\,\mathcal{N}_{\phi}
    \left([\mathbf{h}_k,\mathbf{U}_k]\right),
    \label{eq:supp_ucld_vector_field}
\end{equation}
where brackets denote channel concatenation. The network \(\mathcal{N}_{\phi}\) consists of circularly padded \(3\times3\) convolutional layers with GELU activations and intermediate width 45. The learned scalar \(\gamma\) is initialized to \(0.10\).

State updates and reconstruction follow Supplementary Notes~\ref{supp:latent_update_details} and~\ref{supp:wavelet_details}, with \(F_{\rm UCLD}\) replacing the SLD vector field and the network parameters trained for UCLD. The separate ablation removes only \(\mathcal{S}_\psi\), retaining the gated response \(G_\theta\) and the remaining SLD architecture.

\paragraph{ConvLSTM.}

ConvLSTM uses the same temporal refinement architecture as SLD to construct 100 frames on the low-resolution grid. One convolutional LSTM layer processes these frames sequentially, using 35 hidden channels, circular spatial padding, a \(3\times3\) input kernel and a \(5\times5\) hidden kernel. At each output index, the hidden state and refined input frame are decoded by separate Haar wavelet synthesis branches. Their outputs are added to obtain the normalized prediction, following Supplementary Note~\ref{supp:wavelet_details}.

\subsection{Additional baselines for the representative Gray--Scott study}
\label{supp:additional_baselines}
The additional comparison uses the two-dimensional Gray--Scott system with unseen initial conditions. It includes neural operators, a method for image super-resolution, a physics-informed method for spatiotemporal super-resolution and an ablation of the structured local response in SLD. All methods use the same trajectory splits, input and output resolutions, normalization and evaluation metrics.

\paragraph{Fourier Neural Operator (FNO).}
FNO~\cite{li2021fourier} is used as a spectral neural operator with global Fourier-domain mixing. It uses the same \(4\times\) temporal refinement module as SLD. The spatial super-resolution head lifts each refined low-resolution frame to \(n_{\rm feats}=64\) channels, applies three FNO blocks of width 32 with GELU activations, and retains 20 and 17 Fourier modes in the two spatial directions. Each FNO block consists of a spectral convolution plus a pointwise \(1\times1\) convolution.

\paragraph{Laplace Neural Operator (LNO).}
LNO~\cite{cao2024laplace} is used as a neural operator with nonlocal spectral operator layers. It uses the same \(4\times\) temporal refinement module as SLD. The spatial head first lifts each temporally refined low-resolution frame to \(n_{\rm feats}=48\) channels using a shallow convolutional feature extractor, and then projects the features to an LNO width of 32. It applies three LNO blocks, each combining a pole-residue parameterized Laplace spectral operator with a pointwise \(1\times1\) convolution, followed by GELU activation.

\paragraph{MambaNO.}
MambaNO~\cite{zheng2024MambaNO} is used as a neural operator based on a state-space model. It uses the same \(4\times\) temporal refinement module as SLD. The spatial head lifts each refined low-resolution frame to \(n_{\rm feats}=64\) channels and applies three Mamba blocks with channel width 64 and hidden width 64. Each block arranges the feature vectors at the \(HW\) grid points into a sequence, processes it with a Mamba module, and adds the output of a circular \(3\times3\) convolution applied separately to each channel.

\paragraph{One-Step Effective Diffusion Network (OSEDiff).}
The implementation used for the OSEDiff~\cite{wu2024onestep} comparison reconstructs high-resolution images in one step using a spatial head with a U-Net structure and feature modulation. It uses the same \(4\times\) temporal refinement module as SLD. Each temporally refined low-resolution frame is first lifted to \(n_{\rm feats}=48\) channels by three circularly padded \(3\times3\) convolutions with GELU activations. The spatial head then applies a compact encoder--bottleneck--decoder structure with base width 32.

\paragraph{Physics-informed deep super-resolution (PhySR).}
PhySR~\cite{Ren2023PhySR} is used as a physics-informed spatiotemporal super-resolution baseline that combines temporal interpolation, recurrent temporal correction and convolutional spatial reconstruction. In our implementation, the input sequence is first temporally upsampled by a factor of \(4\times\) using linear interpolation, followed by one ConvLSTM layer with \(n_{\rm feats}=32\) hidden channels and circularly padded \(3\times3\) convolutions. Spatial reconstruction is performed by two weight-normalized residual blocks with expansion ratio 4 and residual scaling factor 0.1.

\paragraph{SLD without the structured spatial term.}
This ablation removes \(\mathcal S_\psi(\mathbf h)\) from the latent vector field and retains the learned gated response \(G_\theta\). All other SLD modules, normalization settings and the training objective are unchanged. This comparison tests the contribution of the structured spatial term within SLD.

% ============================================================
\subsection{Training details and checkpoint selection}
\label{supp:training_details}

Within each PDE system and protocol, SLD, UCLD and ConvLSTM use the same optimization procedure. Their objective is the supervised high-resolution reconstruction loss, with an additional PDE residual loss in runs that use that term. Such runs begin with reconstruction loss alone and then gradually increase the residual weight.

\paragraph{Optimization.}

Models are trained for \(1{,}000\) epochs with a batch size of \(4\). We use Adam with an initial learning rate of \(5\times10^{-3}\), weight decay of \(10^{-5}\) and cosine annealing to a minimum learning rate of \(10^{-7}\) over the full training run. The gradient norm is clipped at \(0.5\). These settings are shared by all models within each controlled comparison.

For runs using the PDE residual loss, the first 300 epochs use the data term alone. From epoch 301, the residual weight increases linearly from zero to \(\lambda_{\rm phy}=5\times10^{-3}\) over 300 epochs and then remains fixed. The residual is evaluated on all 100 predicted frames. Predictions are transformed to physical units before the mean absolute reconstruction loss and the PDE residual loss are evaluated.

\paragraph{Validation and checkpointing.}

Validation loss and relative \(L_2\) error are recorded after each epoch. The checkpoint with the lowest validation relative \(L_2\) error is used for testing. Training, validation and test assignments follow Supplementary Note~\ref{supp:segment_construction} and Table~\ref{tab:splits}.

The five fixed partition definitions and random seeds used for the reported results are distributed with the evaluation code so that the validation--test assignments can be reproduced exactly.
% ============================================================
\subsection{Numerical results for the main protocols}
\label{supp:numerical_results}

Supplementary Tables~\ref{tab:taskA_numerical} and~\ref{tab:taskB_numerical} report reconstruction errors for unseen initial conditions and later times, respectively. Supplementary Table~\ref{tab:gs_broader_numerical} reports the additional Gray--Scott baselines and the ablation of the structured local response. For unseen initial conditions, errors are first averaged over the four test segments of each held-out trajectory. The mean and standard deviation (s.d.) then summarize the ten resulting values, with each trajectory serving once as the test identity.
For reconstruction at later times, each partition contains five test trajectories and two test segments per trajectory. Errors are averaged over these ten segments to obtain one value per partition. The reported mean and s.d. summarize the five partition values (\(n=5\) repeated partitions), including the variation caused by their validation and test assignments. Some trajectory identities occur in multiple partitions. Figure~\ref{fig:later_time}\textbf{c} displays the mean error curves and their s.d. across the five partitions for the concatenated \(S_4\)--\(S_5\) sequence.

The UCLD and ConvLSTM configurations are given in Supplementary Note~\ref{supp:baseline_details}. Lower errors indicate more accurate reconstruction.

\begin{table}[t]
\centering
\small
\caption{Numerical results for reconstruction from unseen initial conditions. Rel. \(L_2\) denotes relative \(L_2\) error and MAE denotes mean absolute error; lower values indicate better reconstruction. Errors are averaged over the four segments of each held-out trajectory. Values are mean \(\pm\) s.d. over the ten resulting trajectory means (\(n=10\)), with every trajectory serving once as the test identity.
}
\label{tab:taskA_numerical}
\begin{tabular}{llcc}
\toprule
PDE system & Method & Rel. \(L_2\) \(\downarrow\) & MAE \(\downarrow\) \\
\midrule
Gray--Scott
& SLD
& \(\mathbf{1.855\times10^{-2}\pm3.970\times10^{-3}}\)
& \(\mathbf{4.160\times10^{-3}\pm8.150\times10^{-4}}\) \\
& UCLD
& \(5.224\times10^{-2}\pm1.005\times10^{-2}\)
& \(1.570\times10^{-2}\pm3.284\times10^{-3}\) \\
& ConvLSTM
& \(4.550\times10^{-2}\pm5.852\times10^{-3}\)
& \(1.048\times10^{-2}\pm1.427\times10^{-3}\) \\
\midrule
Ginzburg--Landau
& SLD
& \(\mathbf{1.820\times10^{-2}\pm5.127\times10^{-3}}\)
& \(\mathbf{6.017\times10^{-3}\pm1.866\times10^{-3}}\) \\
& UCLD
& \(4.347\times10^{-2}\pm3.348\times10^{-3}\)
& \(1.390\times10^{-2}\pm7.690\times10^{-4}\) \\
& ConvLSTM
& \(4.561\times10^{-2}\pm2.439\times10^{-3}\)
& \(1.189\times10^{-2}\pm5.516\times10^{-4}\) \\
\midrule
Burgers
& SLD
& \(\mathbf{2.829\times10^{-3}\pm7.877\times10^{-4}}\)
& \(\mathbf{9.321\times10^{-4}\pm3.862\times10^{-4}}\) \\
& UCLD
& \(8.268\times10^{-3}\pm2.345\times10^{-3}\)
& \(1.736\times10^{-3}\pm4.610\times10^{-4}\) \\
& ConvLSTM
& \(7.515\times10^{-3}\pm3.018\times10^{-3}\)
& \(2.069\times10^{-3}\pm5.321\times10^{-4}\) \\
\bottomrule
\end{tabular}
\end{table}

\begin{table}[t]
\centering
\small
\caption{Numerical results for reconstruction at later times. Rel. \(L_2\) denotes relative \(L_2\) error and MAE denotes mean absolute error; lower values indicate better reconstruction. Within each partition, errors are averaged over the two later segments from each of five test trajectories. Values are mean \(\pm\) s.d. of the five partition means (\(n=5\) repeated partitions).
}
\label{tab:taskB_numerical}
\begin{tabular}{llcc}
\toprule
PDE system & Method & Rel. \(L_2\) \(\downarrow\) & MAE \(\downarrow\) \\
\midrule
Gray--Scott
& SLD
& \(\mathbf{1.574\times10^{-2}\pm8.054\times10^{-4}}\)
& \(\mathbf{3.593\times10^{-3}\pm2.011\times10^{-4}}\) \\
& UCLD
& \(8.556\times10^{-2}\pm1.386\times10^{-2}\)
& \(2.997\times10^{-2}\pm5.108\times10^{-3}\) \\
& ConvLSTM
& \(3.613\times10^{-2}\pm8.396\times10^{-4}\)
& \(7.554\times10^{-3}\pm2.409\times10^{-4}\) \\
\midrule
Ginzburg--Landau
& SLD
& \(\mathbf{1.250\times10^{-2}\pm4.904\times10^{-4}}\)
& \(\mathbf{3.765\times10^{-3}\pm1.002\times10^{-4}}\) \\
& UCLD
& \(3.703\times10^{-2}\pm4.419\times10^{-3}\)
& \(1.345\times10^{-2}\pm1.844\times10^{-3}\) \\
& ConvLSTM
& \(4.575\times10^{-2}\pm3.246\times10^{-4}\)
& \(1.122\times10^{-2}\pm1.171\times10^{-4}\) \\
\midrule
Burgers
& SLD
& \(\mathbf{1.071\times10^{-3}\pm6.138\times10^{-5}}\)
& \(\mathbf{2.575\times10^{-4}\pm2.812\times10^{-5}}\) \\
& UCLD
& \(7.363\times10^{-3}\pm6.759\times10^{-4}\)
& \(1.794\times10^{-3}\pm1.599\times10^{-4}\) \\
& ConvLSTM
& \(6.467\times10^{-3}\pm1.332\times10^{-3}\)
& \(2.188\times10^{-3}\pm5.814\times10^{-4}\) \\
\bottomrule
\end{tabular}
\end{table}

\begin{table}[t]
\centering
\small
\caption{Numerical results for the additional Gray--Scott baselines and the local response ablation under the protocol for unseen initial conditions. Rel. \ensuremath{L_2} denotes relative \ensuremath{L_2} error and MAE denotes mean absolute error; lower values indicate better reconstruction. Values are mean \(\pm\) s.d. across ten held-out trajectories after averaging the four test segments within each trajectory.
}
\label{tab:gs_broader_numerical}
\begin{tabular}{lcc}
\toprule
Method & Rel. \ensuremath{L_2} \(\downarrow\) & MAE \(\downarrow\) \\
\midrule
SLD & \ensuremath{\mathbf{1.855\times10^{-2}\pm3.970\times10^{-3}}} & \ensuremath{\mathbf{4.160\times10^{-3}\pm8.150\times10^{-4}}} \\
SLD without local response & \ensuremath{3.720\times10^{-2}\pm1.433\times10^{-3}} & \ensuremath{7.825\times10^{-3}\pm2.621\times10^{-4}} \\
FNO & \ensuremath{1.011\times10^{-1}\pm6.480\times10^{-3}} & \ensuremath{3.090\times10^{-2}\pm2.057\times10^{-3}} \\
LNO & \ensuremath{1.056\times10^{-1}\pm2.224\times10^{-2}} & \ensuremath{3.322\times10^{-2}\pm7.878\times10^{-3}} \\
MambaNO & \ensuremath{5.607\times10^{-2}\pm8.965\times10^{-3}} & \ensuremath{1.636\times10^{-2}\pm3.496\times10^{-3}} \\
OSEDiff & \ensuremath{6.487\times10^{-2}\pm3.849\times10^{-3}} & \ensuremath{1.970\times10^{-2}\pm1.237\times10^{-3}} \\
PhySR & \ensuremath{7.925\times10^{-2}\pm5.671\times10^{-3}} & \ensuremath{2.083\times10^{-2}\pm1.201\times10^{-3}} \\
\bottomrule
\end{tabular}
\end{table}

% ============================================================
\subsection{Additional reconstruction diagnostics and visual evidence}
\label{supp:additional_diagnostics}

This section shows additional held-out reconstructions and maps of spatial errors. The fields are expressed in physical units and use the same predictions and references as the numerical tables.
% AUTHOR CHECK REQUIRED: For Supplementary Figs. S1--S3, specify the error sign convention and the trajectory, segment, time and channel represented by each row or column group. Error colorbars contain positive and negative values.

The same trajectory, segment, output time and state variable are shown for every compared method.
% AUTHOR CHECK REQUIRED [AC-13]: State the single plotting-code selection rule and
% the selected trajectory, segment, output time and state variable.

\subsubsection{Additional reconstructions for unseen initial conditions}
\label{supp:additional_unseen_visuals}

Supplementary Figs.~\ref{fig:supp_additional_unseen_heatmaps_GS}--\ref{fig:supp_additional_unseen_heatmaps_Burgers} show reconstructions from additional held-out trajectories and temporal segments under the protocol for unseen initial conditions.

\begin{figure}[!htb]
\centering
   \begin{minipage}{0.8\textwidth}
     \centering
     \includegraphics[width=\linewidth]{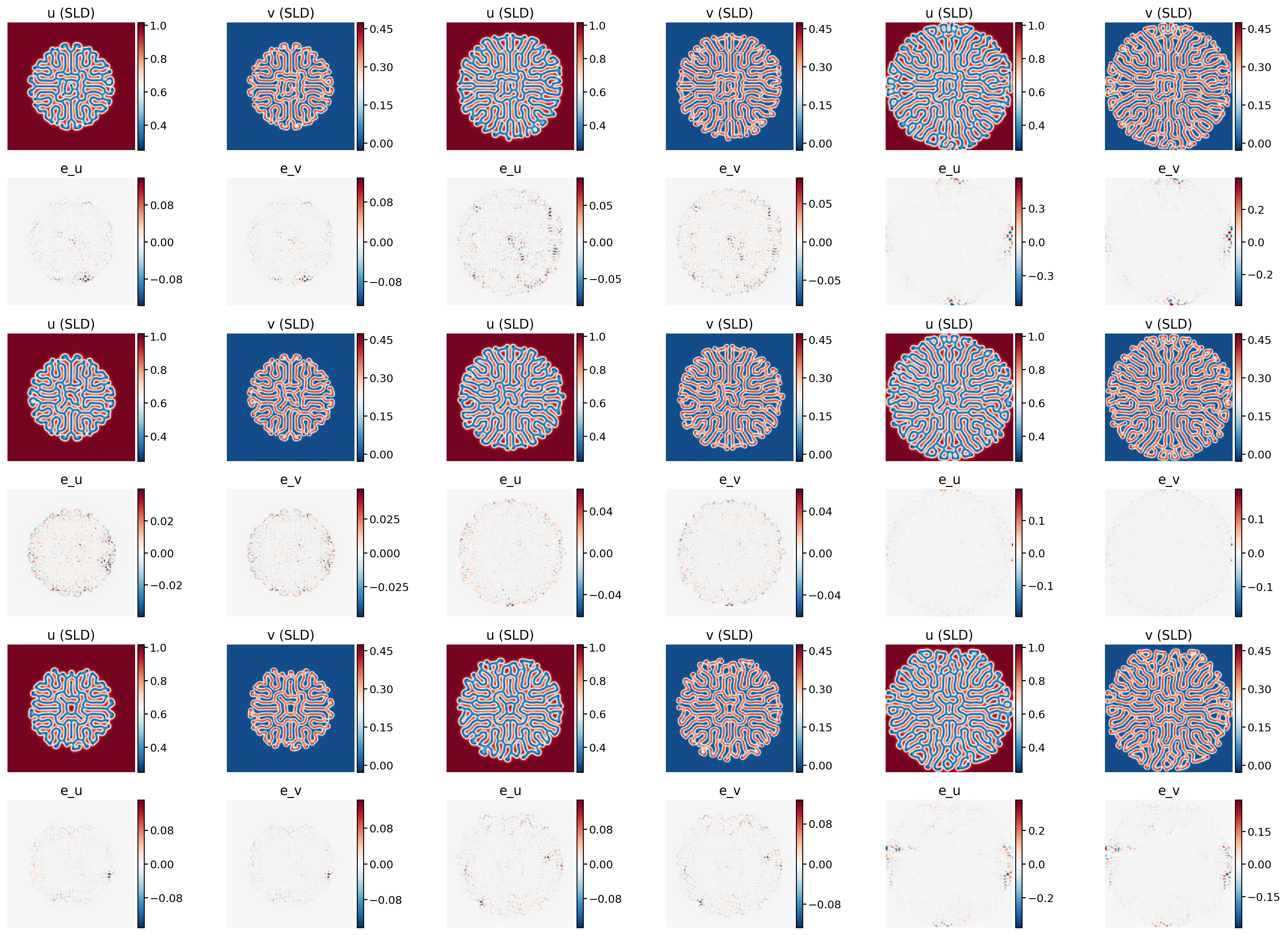}
   \end{minipage}
\caption{\textbf{Additional Gray--Scott reconstructions for unseen initial conditions.} SLD reconstructions of the two state variables and the corresponding error maps for selected held-out examples.
}
\label{fig:supp_additional_unseen_heatmaps_GS}
\end{figure}

\begin{figure}[!htb]
\centering
   \begin{minipage}{0.8\textwidth}
     \centering
     \includegraphics[width=\linewidth]{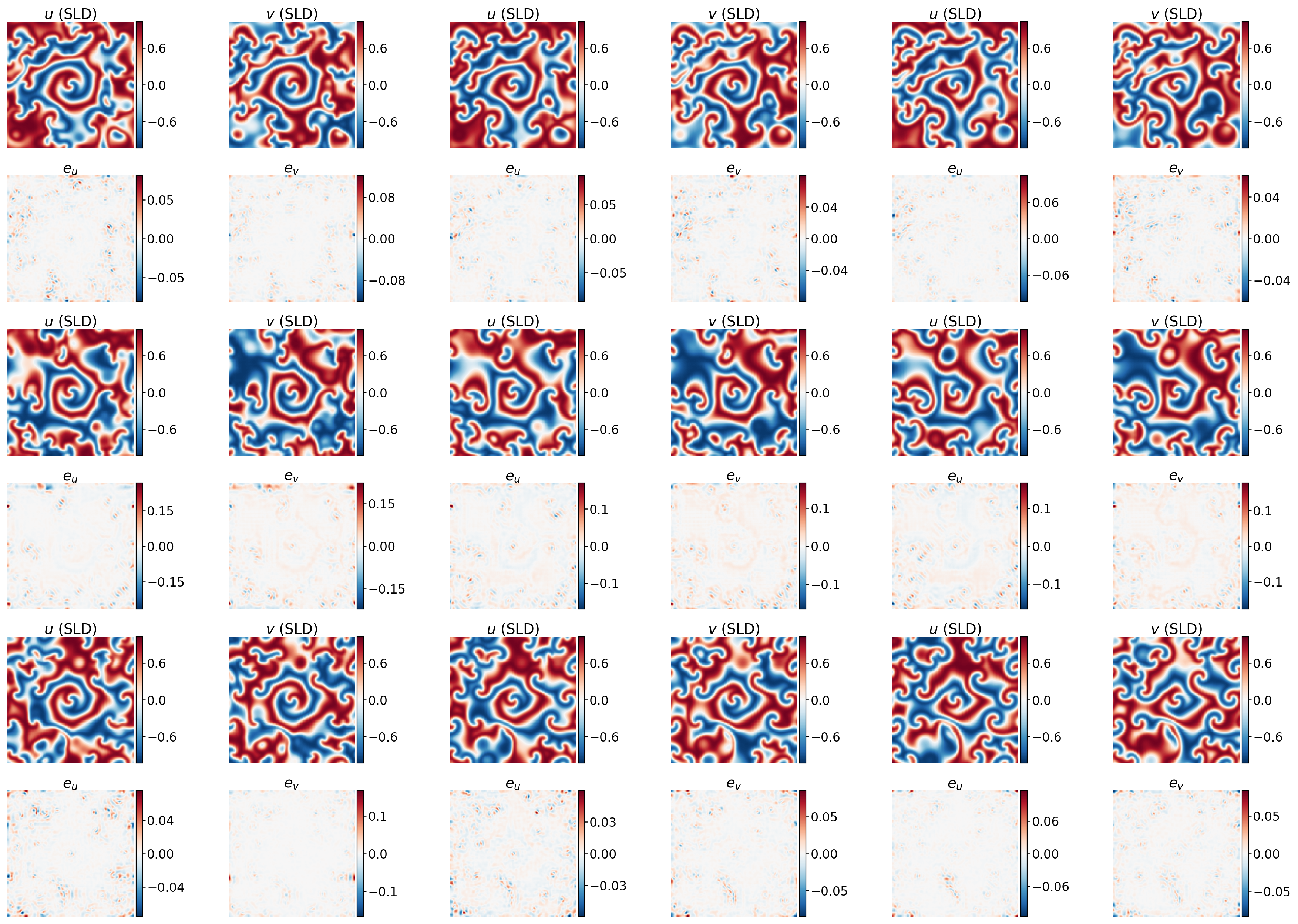}
   \end{minipage}
\caption{\textbf{Additional complex Ginzburg--Landau reconstructions for unseen initial conditions.} SLD reconstructions of the two state variables and the corresponding error maps for selected held-out examples.
}
\label{fig:supp_additional_unseen_heatmaps_GL}
\end{figure}

\begin{figure}[!htb]
\centering
   \begin{minipage}{0.8\textwidth}
     \centering
     \includegraphics[width=\linewidth]{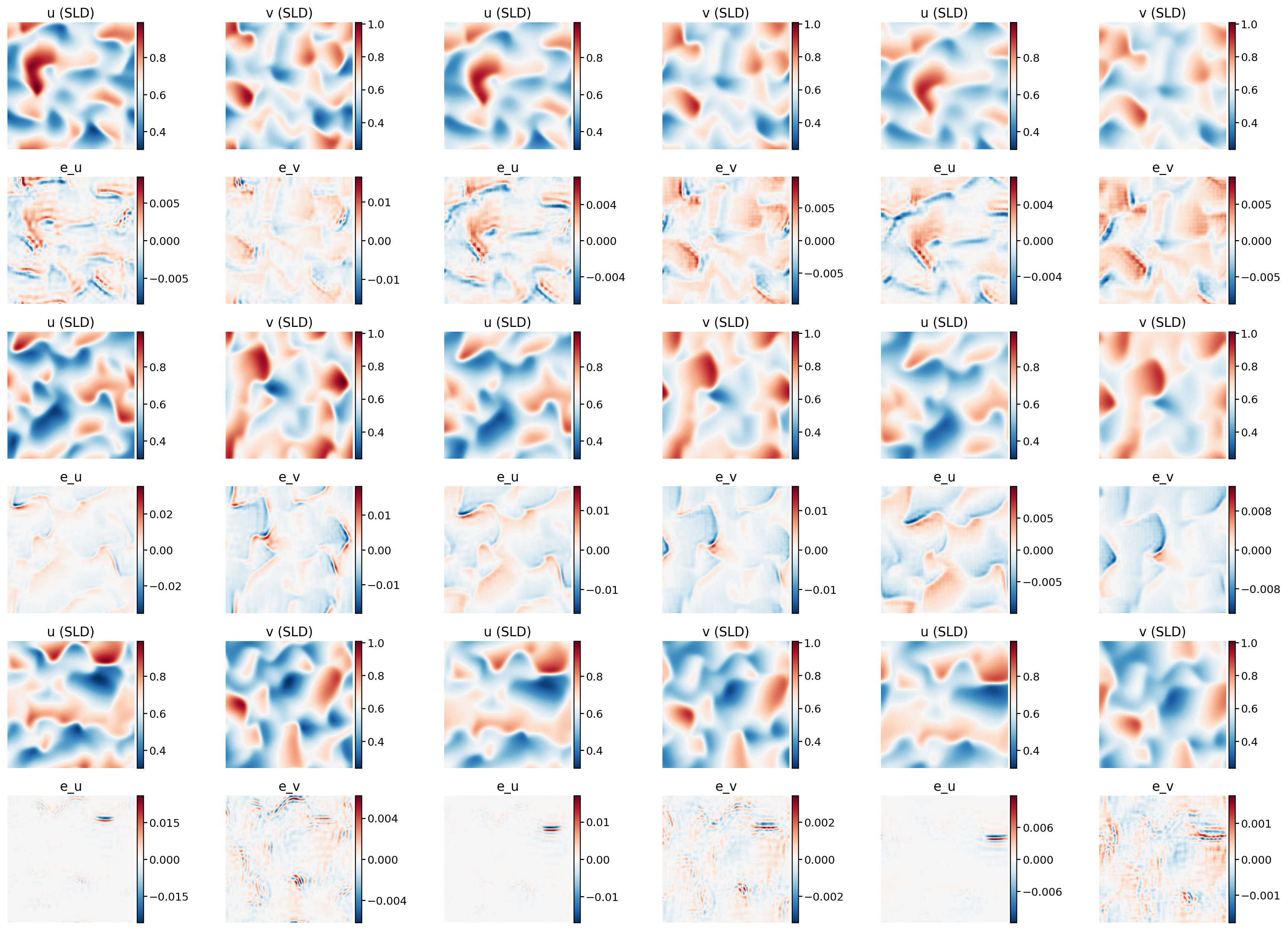}
   \end{minipage}
\caption{\textbf{Additional Burgers reconstructions for unseen initial conditions.} SLD reconstructions of the two state variables and the corresponding error maps for selected held-out examples.
}
\label{fig:supp_additional_unseen_heatmaps_Burgers}
\end{figure}

\subsubsection{Spatial maps of absolute error}
\label{supp:roi_error_maps}
Supplementary Fig.~\ref{fig:supp_roi_error_maps} compares the spatial distribution of absolute reconstruction errors for SLD, UCLD and ConvLSTM over the full domain. Each system and state variable uses a common color scale across methods.
\begin{figure}[!htb]
\centering
   \begin{minipage}{0.6\textwidth}
     \centering
     \includegraphics[width=\linewidth]{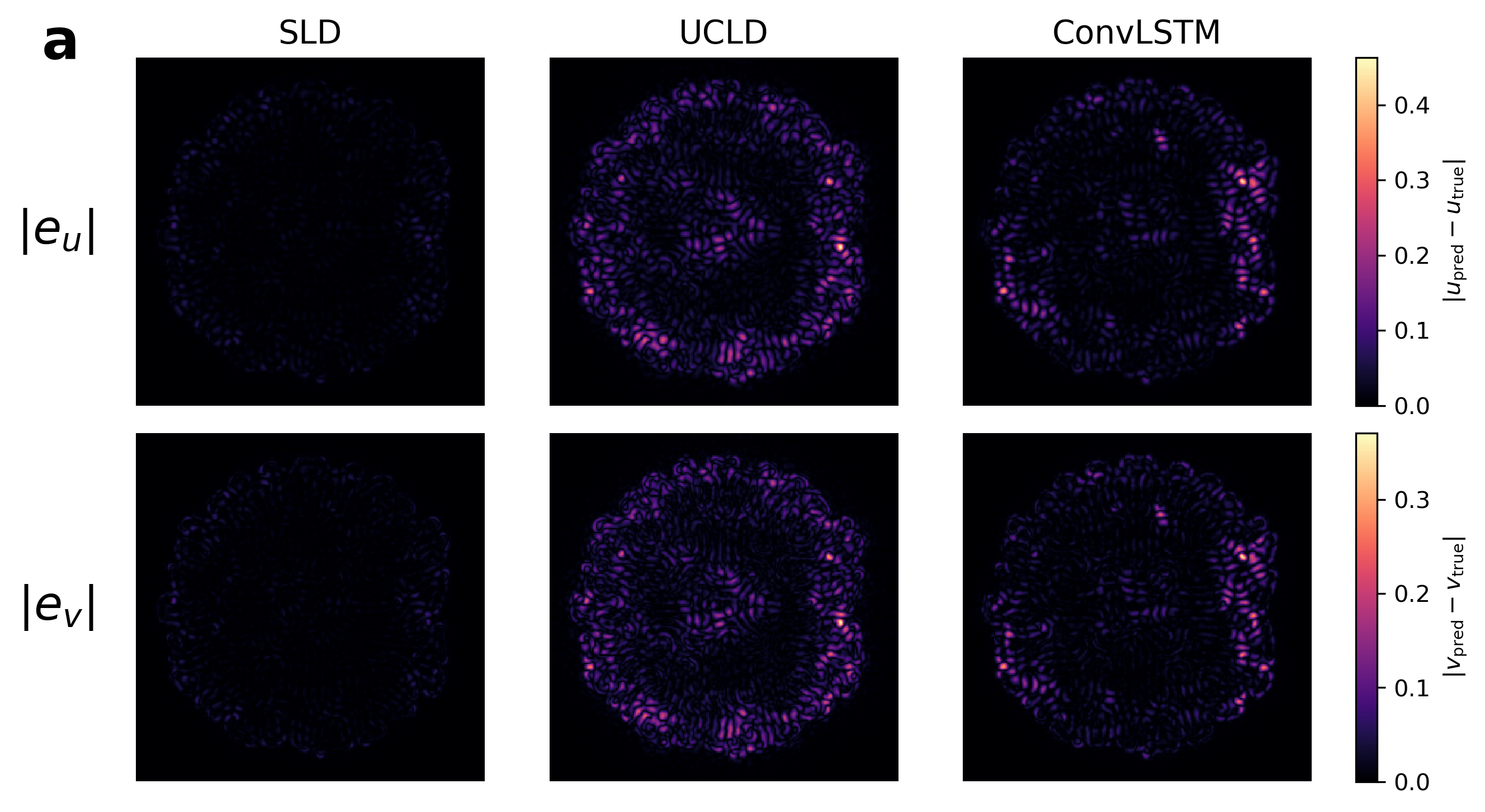}
   \end{minipage}

   \begin{minipage}{0.6\textwidth}
     \centering
     \includegraphics[width=\linewidth]{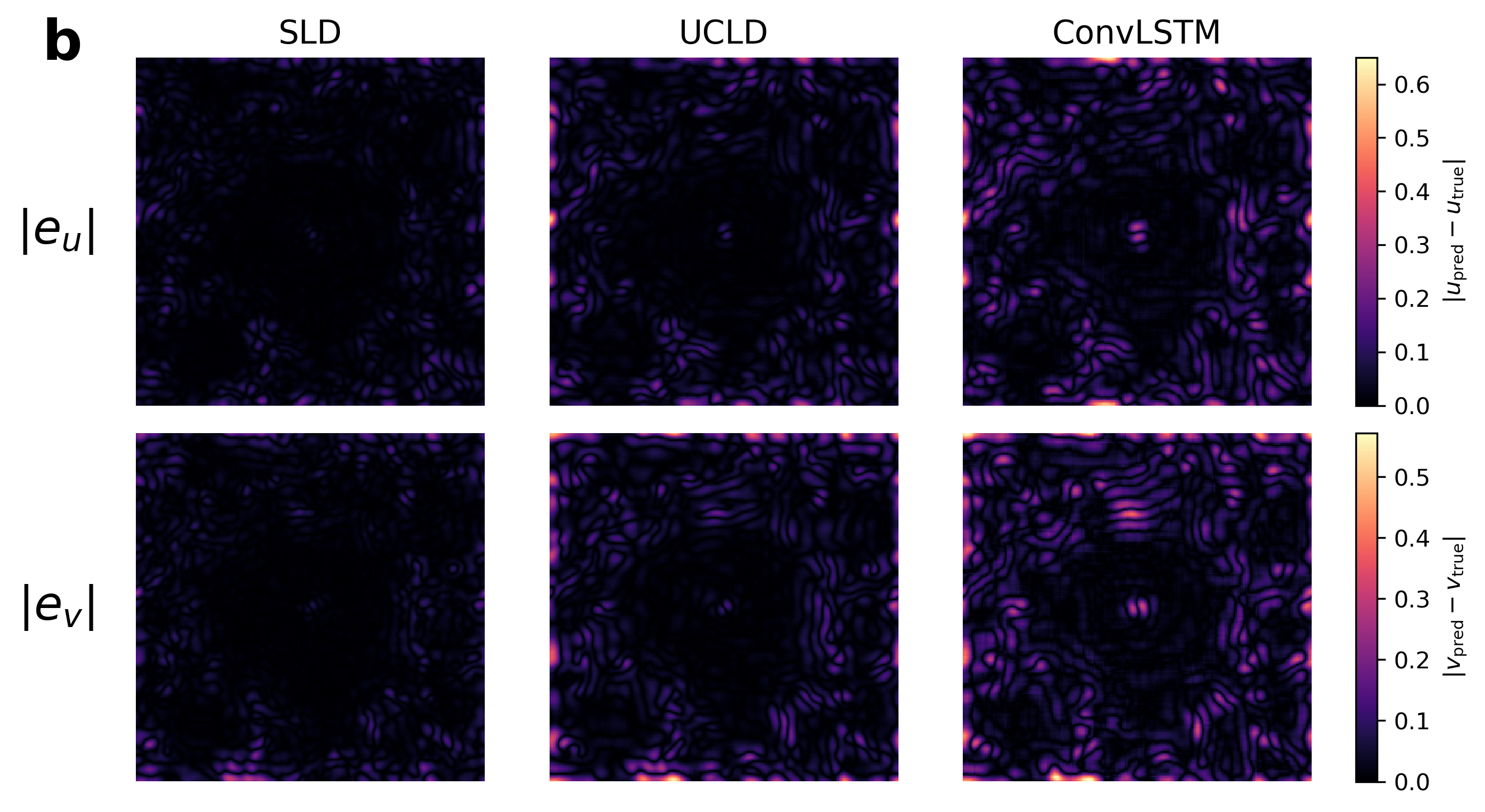}
   \end{minipage}

   \begin{minipage}{0.6\textwidth}
     \centering
     \includegraphics[width=\linewidth]{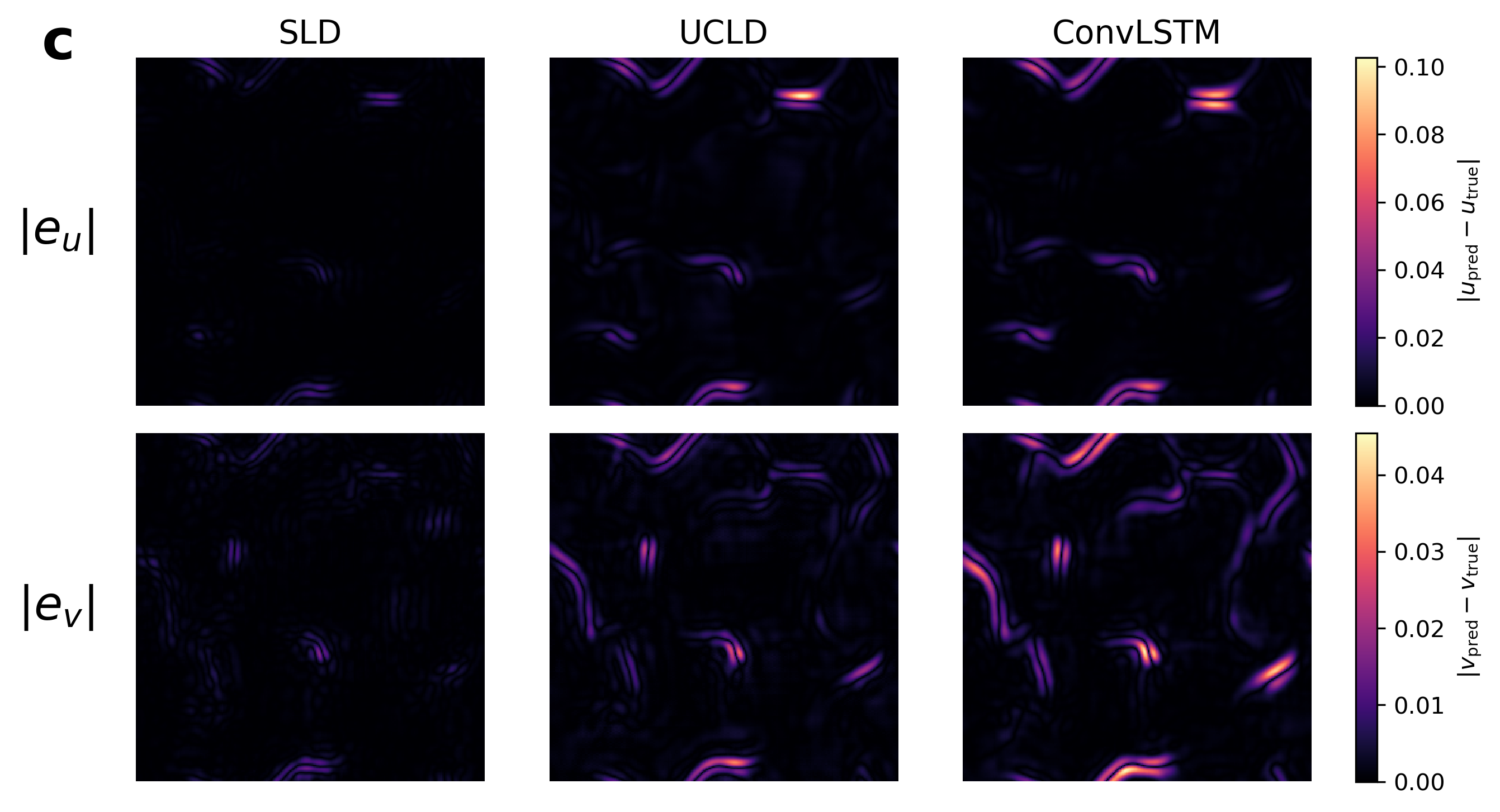}
   \end{minipage}

\caption{\textbf{Spatial maps of absolute reconstruction error.} \textbf{a,} Gray--Scott. \textbf{b,} Complex Ginzburg--Landau. \textbf{c,} Burgers. Columns compare SLD, UCLD and ConvLSTM, and rows correspond to the two state variables.
}
\label{fig:supp_roi_error_maps}
\end{figure}

% ============================================================
\subsection{Operator characterization of the structured local response}
\label{supp:operator_characterization}

We characterize the linear response \(\mathcal{S}_{\psi}\) on a periodic grid through its Fourier multiplier, operator norm bound and energy identity.

Let
\begin{equation}
\mathcal{S}_{\psi}\mathbf{h}
=
\boldsymbol{\alpha}\odot\Delta\mathbf{h}
+
\boldsymbol{\beta}_x\odot D_x\mathbf{h}
+
\boldsymbol{\beta}_y\odot D_y\mathbf{h},
\label{eq:supp_structured_response}
\end{equation}

where \(D_x\) and \(D_y\) are central differences, \(\Delta\) is the five-point Laplacian, and \(\boldsymbol{\alpha},\boldsymbol{\beta}_x,\boldsymbol{\beta}_y\) are real coefficients learned separately for each latent channel. The latent grid has unit spacing, as in the implementation.

\begin{theorem}[Fourier multiplier, norm bound and energy identity]
\label{thm:sld_response}
Consider
\(\mathbf{h}\in\mathbb{R}^{C_h\times N_y\times N_x}\) on a two-dimensional
periodic grid. For each latent channel \(c\), define
\begin{equation}
(D_x h_c)_{i,j}
=
\frac{h_{c,i,j+1}-h_{c,i,j-1}}{2},
\qquad
(D_y h_c)_{i,j}
=
\frac{h_{c,i+1,j}-h_{c,i-1,j}}{2},
\label{eq:supp_dx_dy}
\end{equation}
and
\begin{equation}
(\Delta h_c)_{i,j}
=
h_{c,i+1,j}
+
h_{c,i-1,j}
+
h_{c,i,j+1}
+
h_{c,i,j-1}
-
4h_{c,i,j}.
\label{eq:supp_lap}
\end{equation}
Let
\begin{equation}
\mathcal{S}_{\psi,c}
=
\alpha_c\Delta
+
\beta_{x,c}D_x
+
\beta_{y,c}D_y .
\label{eq:supp_channel_operator}
\end{equation}

Then \(\mathcal{S}_{\psi,c}\) is diagonalized by the discrete Fourier basis. For \(\xi=2\pi p/N_x\) and \(\eta=2\pi q/N_y\), where \(p=0,\ldots,N_x-1\) and \(q=0,\ldots,N_y-1\), its multiplier is
\begin{equation}
    \lambda_c(\xi,\eta)
    =
    -4\alpha_c
    \left[
    \sin^2\!\left(\frac{\xi}{2}\right)
    +
    \sin^2\!\left(\frac{\eta}{2}\right)
    \right]
    +
    \mathrm{i}\,\beta_{x,c}\sin\xi
    +
    \mathrm{i}\,\beta_{y,c}\sin\eta .
    \label{eq:sld_multiplier_supp}
\end{equation}
Moreover,
\begin{equation}
    \|\mathcal{S}_{\psi}\mathbf{h}\|_2
    \le
    \left(
    8\|\boldsymbol{\alpha}\|_\infty
    +
    \|\boldsymbol{\beta}_x\|_\infty
    +
    \|\boldsymbol{\beta}_y\|_\infty
    \right)
    \|\mathbf{h}\|_2 .
    \label{eq:sld_operator_bound_supp}
\end{equation}
Finally, for each channel \(c\),
\begin{equation}
    \left\langle
    h_c,
    \mathcal{S}_{\psi,c}h_c
    \right\rangle
    =
    -\alpha_c
    \left(
    \|\nabla_x^+ h_c\|_2^2
    +
    \|\nabla_y^+ h_c\|_2^2
    \right),
    \label{eq:sld_energy_identity_supp}
\end{equation}

where \((\nabla_x^+h_c)_{i,j}=h_{c,i,j+1}-h_{c,i,j}\) and \((\nabla_y^+h_c)_{i,j}=h_{c,i+1,j}-h_{c,i,j}\), with periodic indices. Inner products and norms use the sum over grid points. The central differences contribute zero to this inner product. The Laplacian contribution is nonpositive when \(\alpha_c\ge0\) and nonnegative when \(\alpha_c\le0\).
\end{theorem}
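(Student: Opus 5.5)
The plan is to work one channel at a time on the periodic grid \(\mathbb{Z}_{N_y}\times\mathbb{Z}_{N_x}\). All three operators \(D_x\), \(D_y\), \(\Delta\) in \eqref{eq:supp_dx_dy}--\eqref{eq:supp_lap} are translation-invariant linear maps, that is, circular convolutions with fixed stencils. They are therefore simultaneously diagonalized by the discrete Fourier basis.

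\textbf{Step 1: the multiplier.} Take the plane wave \(e_{p,q}(i,j)=\exp(\mathrm{i}(\xi j+\eta i))\) with \(\xi=2\pi p/N_x\) and \(\eta=2\pi q/N_y\). Here the column index \(j\) pairs with \(\xi\) and \(N_x\), matching \eqref{eq:supp_dx_dy}. A shift \(j\mapsto j\pm1\) multiplies \(e_{p,q}\) by \(e^{\pm\mathrm{i}\xi}\), so
\begin{equation*}
D_x e_{p,q}=\tfrac{1}{2}\left(e^{\mathrm{i}\xi}-e^{-\mathrm{i}\xi}\right)e_{p,q}=\mathrm{i}\sin\xi\, e_{p,q},
\end{equation*}
and likewise \(D_y e_{p,q}=\mathrm{i}\sin\eta\, e_{p,q}\). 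For the Laplacian,
\begin{equation*}
\Delta e_{p,q}=\left(2\cos\xi-2+2\cos\eta-2\right)e_{p,q}=-4\left[\sin^2(\xi/2)+\sin^2(\eta/2)\right]e_{p,q},
\end{equation*}
using \(1-\cos\theta=2\sin^2(\theta/2)\). By linearity, \(\mathcal{S}_{\psi,c}e_{p,q}=\lambda_c(\xi,\eta)\,e_{p,q}\) with \(\lambda_c\) as in \eqref{eq:sld_multiplier_supp}. The \(e_{p,q}\) form an orthogonal basis of \(\mathbb{C}^{N_y\times N_x}\), which gives diagonalization.

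\textbf{Step 2: the norm bound.} By Parseval, \(\|\mathcal{S}_{\psi,c}h_c\|_2\le\max_{p,q}|\lambda_c|\,\|h_c\|_2\). The triangle inequality gives
\begin{equation*}
|\lambda_c|\le 4|\alpha_c|\cdot 2+|\beta_{x,c}|+|\beta_{y,c}|,
\end{equation*}
since each \(\sin^2\) term is at most \(1\) and \(|\sin|\le1\). This is at most \(8\|\boldsymbol\alpha\|_\infty+\|\boldsymbol\beta_x\|_\infty+\|\boldsymbol\beta_y\|_\infty\). Because \(\mathcal{S}_\psi\) acts channelwise, we have \(\|\mathcal{S}_\psi\mathbf{h}\|_2^2=\sum_c\|\mathcal{S}_{\psi,c}h_c\|_2^2\). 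Bounding each term by the uniform constant squared times \(\|h_c\|_2^2\) and summing yields \eqref{eq:sld_operator_bound_supp}.

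\textbf{Step 3: the energy identity.} This can be done directly in real space by summation by parts, using the periodic index shift \(j\mapsto j+1\):
\begin{equation*}
\langle h, D_x h\rangle=\tfrac12\sum_{i,j} h_{i,j}h_{i,j+1}-\tfrac12\sum_{i,j}h_{i,j}h_{i,j-1}=0,
\end{equation*}
and the same holds for \(D_y\). So the central differences are skew-adjoint and contribute nothing. For the Laplacian, write \(\Delta=-(\nabla_x^+)^{\top}\nabla_x^+-(\nabla_y^+)^{\top}\nabla_y^+\). This follows since \((\nabla_x^+)^\top g_{i,j}=g_{i,j-1}-g_{i,j}\), so \(-(\nabla_x^+)^\top\nabla_x^+h\) equals \(h_{i,j+1}-2h_{i,j}+h_{i,j-1}\). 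Then
\begin{equation*}
\langle h,\alpha_c\Delta h\rangle=-\alpha_c\left(\|\nabla_x^+h\|_2^2+\|\nabla_y^+h\|_2^2\right),
\end{equation*}
and the sign statements follow immediately. As a cross-check, the Fourier picture gives the same result: \(\operatorname{Re}\lambda_c\) carries only the \(\alpha_c\) term, and \(|e^{\mathrm{i}\xi}-1|^2=4\sin^2(\xi/2)\).

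\textbf{Main obstacle.} No step is deep; the only real risk is bookkeeping. One must fix the DFT sign convention and the pairing of axes (\(j\leftrightarrow\xi\leftrightarrow N_x\)) consistently with \eqref{eq:supp_dx_dy}. One must also note that the inner product in \eqref{eq:sld_energy_identity_supp} is real, since \(h_c\) is real. That is why the purely imaginary parts of \(\lambda_c\) drop out, and I will state this explicitly.
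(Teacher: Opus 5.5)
Your proposal is correct and follows essentially the same route as the paper: plane-wave eigenfunctions give the multiplier, the per-channel bound $8|\alpha_c|+|\beta_{x,c}|+|\beta_{y,c}|$ is squared and summed over channels, and skew-adjointness plus summation by parts give the energy identity. The only difference is that you verify the skew-adjointness and the factorization $\Delta=-(\nabla_x^+)^{\top}\nabla_x^+-(\nabla_y^+)^{\top}\nabla_y^+$ explicitly, whereas the paper simply asserts them.
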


\begin{proof}
On a periodic grid, the discrete Fourier modes diagonalize all circulant
finite-difference operators. Let
\begin{equation}
\varphi_{\xi,\eta}(i,j)
=
\exp\{\mathrm{i}(\xi j+\eta i)\}.
\end{equation}
For the horizontal central difference,
\begin{align}
D_x\varphi_{\xi,\eta}
&=
\frac{
\exp\{\mathrm{i}[\xi(j+1)+\eta i]\}
-
\exp\{\mathrm{i}[\xi(j-1)+\eta i]\}
}{2}  \\
&=
\frac{e^{\mathrm{i}\xi}-e^{-\mathrm{i}\xi}}{2}
\varphi_{\xi,\eta}
=
\mathrm{i}\sin\xi\,
\varphi_{\xi,\eta}.
\end{align}
Similarly,
\begin{equation}
D_y\varphi_{\xi,\eta}
=
\mathrm{i}\sin\eta\,
\varphi_{\xi,\eta}.
\end{equation}
For the five-point Laplacian,
\begin{align}
\Delta\varphi_{\xi,\eta}
&=
\left(
e^{\mathrm{i}\eta}
+
e^{-\mathrm{i}\eta}
+
e^{\mathrm{i}\xi}
+
e^{-\mathrm{i}\xi}
-
4
\right)
\varphi_{\xi,\eta}.
\end{align}
Using
\begin{equation}
e^{\mathrm{i}a}+e^{-\mathrm{i}a}-2
=
-4\sin^2\!\left(\frac{a}{2}\right),
\end{equation}
we obtain
\begin{equation}
\Delta\varphi_{\xi,\eta}
=
-4
\left[
\sin^2\!\left(\frac{\xi}{2}\right)
+
\sin^2\!\left(\frac{\eta}{2}\right)
\right]
\varphi_{\xi,\eta}.
\end{equation}
Combining the multipliers of \(D_x\), \(D_y\) and \(\Delta\) gives
Eq.~\eqref{eq:sld_multiplier_supp}.

The operator norm bound follows from the maximum magnitudes of the individual
Fourier multipliers. For the five-point Laplacian,
\begin{equation}
\left|
-4
\left[
\sin^2\!\left(\frac{\xi}{2}\right)
+
\sin^2\!\left(\frac{\eta}{2}\right)
\right]
\right|
\le 8 .
\end{equation}
For the central differences,
\begin{equation}
|\mathrm{i}\sin\xi|\le 1,
\qquad
|\mathrm{i}\sin\eta|\le 1.
\end{equation}
Therefore, for each channel,
\begin{equation}
\|\mathcal{S}_{\psi,c}h_c\|_2
\le
\left(
8|\alpha_c|
+
|\beta_{x,c}|
+
|\beta_{y,c}|
\right)
\|h_c\|_2 .
\end{equation}

Taking the maximum coefficient magnitudes over channels, squaring the channel-wise inequalities and summing over channels gives Eq.~\eqref{eq:sld_operator_bound_supp}.

It remains to prove the energy identity. On a periodic grid, the central
difference operators are skew-adjoint:
\begin{equation}
D_x^\ast=-D_x,
\qquad
D_y^\ast=-D_y .
\end{equation}
For real-valued \(h_c\), this implies
\begin{equation}
\langle h_c,D_xh_c\rangle=0,
\qquad
\langle h_c,D_yh_c\rangle=0.
\end{equation}
The periodic five-point Laplacian satisfies the summation-by-parts identity
\begin{equation}
\langle h_c,\Delta h_c\rangle
=
-
\|\nabla_x^+ h_c\|_2^2
-
\|\nabla_y^+ h_c\|_2^2 .
\end{equation}
Substituting these identities into
\begin{align}
\left\langle h_c,\mathcal{S}_{\psi,c}h_c\right\rangle
&=
\alpha_c
\langle h_c,\Delta h_c\rangle
+
\beta_{x,c}
\langle h_c,D_xh_c\rangle
+
\beta_{y,c}
\langle h_c,D_yh_c\rangle
\end{align}
yields Eq.~\eqref{eq:sld_energy_identity_supp}.
\end{proof}

\paragraph{Interpretation.}

 For the isolated linear evolution \(\partial_t h_c=\mathcal{S}_{\psi,c}h_c\), the central-difference terms change the phase of each Fourier mode. The Laplacian term attenuates nonconstant modes when \(\alpha_c>0\) and amplifies them when \(\alpha_c<0\). Equation~\eqref{eq:sld_operator_bound_supp} bounds the norm of this linear operator, and Eq.~\eqref{eq:sld_energy_identity_supp} gives its contribution to the rate of change of latent-state energy. These properties characterize the structured spatial term; stability of the full update also depends on the learned gated response and the time discretization.
\end{document}